\tikzstyle{startstop} = [rectangle, rounded corners, 
\tikzstyle{io} = [trapezium, 
\tikzstyle{process} = [rectangle, 
\tikzstyle{decision} = [diamond, 
\tikzstyle{arrow} = [thick,->,>=stealth]
\theoremstyle{thmstyleone}%
\theoremstyle{thmstyletwo}%
\newtheorem{lemma}{Lemma}
\theoremstyle{thmstylethree}%
\begin{document}

\title[Article Title]{An exact coverage path planning algorithm for UAV-based search and rescue operations}

\author[a]{\fnm{Sina} \sur{Kazemdehbashi}}

\author[b]{\fnm{Yanchao} \sur{Liu}}

\affil[a]{\orgdiv{hm3369@wayne.edu}}
\affil[b]{\orgdiv{yanchaoliu@wayne.edu}}

\abstract{

Unmanned aerial vehicles (UAVs) are increasingly utilized in global search and rescue efforts, enhancing operational efficiency. In these missions, a coordinated swarm of UAVs is deployed to efficiently cover expansive areas by capturing and analyzing aerial imagery and footage. Rapid coverage is paramount in these scenarios, as swift discovery can mean the difference between life and death for those in peril.
This paper focuses on optimizing flight path planning for multiple UAVs in windy conditions to efficiently cover rectangular search areas in minimal time. We address this challenge by dividing the search area into a grid network and formulating it as a mixed-integer program (MIP). Our research introduces a precise lower bound for the objective function and an exact algorithm capable of finding either the optimal solution or a near-optimal solution with a constant absolute gap to optimality. Notably, as the problem complexity increases, our solution exhibits a diminishing relative optimality gap while maintaining negligible computational costs compared to the MIP approach.
}

\keywords{Coverage Path Planning, Unmanned Aerial Vehicle, Mixed-Integer Programming}

\maketitle

\section{Introduction}\label{sec:intro}
Thousands of people are reported missing and later found dead each year, due to being trapped or immobilized in harsh environments. For instance, approximately $4,000$ casualties in maritime environments have been reported each year since 2014 (\cite{Cho2021}). To enhance the efficiency of life-saving operations, search and rescue (SAR) teams nowadays try to embrace cutting-edge technologies like UAVs and artificial intelligence to improve safety and operational efficiency (\cite{martinez2021search}). Remote controlled or autopiloted UAVs are quicker to deploy, less expensive to operate and maintain and can reach more locations, compared to other means of search such as ones conducted by canine, on foot or by helicopter. UAVs are becoming the preferred equipment for many SAR missions (\cite{lyu2023unmanned}).

Despite the many benefits, using UAVs to conduct large area searches comes with several unique challenges, such as limitations and uncertainties brought about by the battery capacity, weather conditions, and collision avoidance constraints. The energy consumption, flight time and stability of small-sized multirotor UAVs are particularly susceptible to the effects of wind (\cite{Gianfelice2022}). Furthermore, when multiple UAVs are employed in a SAR mission, the division of tasks and the planning of flight paths in the midst of the above constraints become nontrivial, and often require sophisticated treatment calculation. The operations research that addresses the such challenges are termed \emph{Coverage Path Planning (CPP)} and there is a rich literature about it (\cite{maza2007multiple,Bouzid2017,forsmo2013optimal,di2016coverage}). However, most of the studies ignored weather conditions. We argue that minimizing the overall time to cover the search area should be the primary objective in SAR type of operations, and the relation between flight and wind directions plays an important role in optimizing that objective. This argument is corroborated in \cite{Coombes2018}, which presented a wind-aware area survey method to optimally cover an area with a single UAV. In this paper, we aim to fill a research gap by solving a practical version of the CPP problem that involves using multiple UAVs to cover a rectangular area with explicit consideration for wind conditions. Specifically, we propose to discretize the search area into a grid of square cells whereas the dimension of a cell matches the flight altitude and camera aperture configurations of the mission. The orientation of this artificial grid is set in a way such that UAVs traversing the grid in a Von Neumann fashion (to be defined in Sect. \ref{gridphase}) will fly in a direction either parallel to (same or opposite) or perpendicular to the wind direction. On top of this grid, we formulate the multiple-UAV CPP problem as a mixed-integer programming model which can be solved using commercial solvers such as CPLEX and GUROBI. 
To significantly shorten the computing time for larger instances, we develop a specialized algorithm that can guarantee a feasible solution with a provable performance bound. More specifically, the solution produced by our proposed algorithm either yields the minimal coverage time or yields a coverage time which is exactly $T_p$ longer than the minimum, where $T_p$ is the time it takes for a UAV to traverse one cell in a perpendicular direction to the wind. 

The organization of the paper is as follows: Section \ref{sec: literature-review} reviews the related literature. Section \ref{sec:problem description} defines the problem and presents a MIP formulation of the problem. In Sect. \ref{sec: solution approach}, we derive a mathematical formula to obtain the lower bound of the problem's objective, and introduce an efficient algorithm to construct a feasible solution whose objective is either the same as or close to the lower bound, thereby solving the problem much faster than a commercial MIP solver. 
 Section \ref{sec:Moore connectivity proof} proves that the proposed method can be extended to cases utilizing the Moore neighborhood connectivity (to be defined in Sect. \ref{gridphase}). Validation experiments and sensitivity analysis are presented in Sect. \ref{sec:experimentalRes}. Finally, Section \ref{sec:conclusion} concludes the paper and suggests several extensions and new developments for future work.

\section{Literature Review}\label{sec: literature-review}

The primary objective of the CPP is to develop a path plan to cover the entire target region while avoiding any obstacles.
\textcite{cabreira2019survey} reviewed studies on CPP using UAVs with several classification criteria, including the shape of the search area, how the search area is decomposed into smaller, more manageable parts, the performance metrics being used to guide the path planning, and how the search patterns are formed. Among these criteria, the area decomposition method is the primary factor that determines the nature of the mathematical model and solution approach for the problem. Therefore, we categorize CPP methods into two types based on this factor: ones that employed an exact cellular decomposition of the search area, and ones that employed an approximate (grid-based) decomposition of the search area.

In the exact cell decomposition method, the area of interest is divided into several sub-areas using linear boundaries, and the sub-areas are called cells. \textcite{Latombe1991} presented a trapezoidal decomposition that divided the target region into trapezoidal cells, whereas each cell could be covered with simple back-and-forth motions. \textcite{Choset2000} proposed an improved method called the boustrophedon decomposition that attempted to merge the trapezoidal cells into larger non-convex cells to create opportunities for path length reduction when the robots engaged in back-and-forth motions to cover the cells. \textcite{kong2006distributed} utilized the boustrophedon cell decomposition to address the multi-robot coverage problem. 
\textcite{Jiao2010} presented a method for the CPP problem in polygonal areas.  They defined the width of a convex polygonal shape and demonstrated that a UAV should fly along the vertical direction of the width to achieve a coverage path with the least number of turns. Then, they proposed a convex decomposition algorithm to partition a concave area into convex subregions by minimizing the sum of the widths of those subregions. Also, they developed a subregion connection algorithm to determine which combination of subregions has the minimum traversal path to cover the entire area.  
 \textcite{Li2011} 
 proposed a method to address the coverage path planning for UAVs in a polygonal area. They showed that, in terms of duration, energy, and path length, turning is an inefficient motion and should be avoided. The authors presented a path with minimal turns for a UAV. Also, they developed a decomposition algorithm to convert a concave area into convex subregions and proposed a subregion connection algorithm to connect adjacent subregions.
 \textcite{di2015energy} proposed an energy-aware path planning algorithm to fully cover a given area while considering other constraints, namely the available energy, the minimum spatial resolution for the pictures, and the maximum camera sampling period. Furthermore, the same authors proposed an energy model for a single UAV based on real measurements, and they used this model to reduce energy consumption in path planning \cite{di2016coverage}. This method assumes back-and-forth motions and determines the UAV's speed to minimize energy consumption, and then the estimated energy consumption is checked for sufficiency to sustain the generated path.
 \textcite{Coombes2018} proposed a new method for planning coverage paths for fixed-wing UAV aerial surveys. They considered windy conditions and proved that flying perpendicular to the wind direction confers a flight time advantage over flying parallel to the wind direction. Additionally, they used dynamic programming to find time-optimal convex decomposition within a polygon.
\textcite{bahnemann2021revisiting} extended the boustrophedon coverage planning by considering different sweep directions in each cell to identify the efficient sweep path. Additionally, they utilized the Equality Generalized Traveling Salesman Problem (E-GTSP) to formulate and find the minimum total path in the adjacency graph.

In addressing the CPP problem, ensuring the complete coverage of the given area is one of the main concerns. This is commonly accomplished by employing cellular decomposition in the area of interest,  where the target area is divided into cells to to make coverage simpler \parencite{choset2001coverage}. In addition to exact cell decomposition, approximate cellular decomposition, or grid-based decomposition, is a common approach used in the literature, where the area of interest is divided into a set of cells, all of which have the same size and shape \parencite{choset2001coverage}. In the grid-based decomposition scheme, some borderline regions outside the area of interest may be included in grid cells, leading to a waste of time and resources which is proportional to the granularity of the grid. 
Despite the loss of accuracy due to discretization, the grid-based decomposition scheme is adopted in many research works. In our opinion, the advantage of grid-based approach includes the ease of managing trajectory conflicts among multiple UAVs, the availability of well-developed frameworks such as mixed-integer programming to analyze the algorithmic performances, and its flexibility to accommodate exact and heuristic enhancements for the path planning task. 
This viewpoint is corroborated by the appreciable volume of studies centered on grid-based decomposition.

 \textcite{Barrientos2011} proposed an integrated tool using a fleet of unmanned aircraft capable of capturing georeferenced images. They considered both regular and irregular grid shapes and utilized the Wavefront algorithm in their approach. Their tool includes several components which are task partitioning and allocation, the CPP algorithm, and robust flight control. 
\textcite{Valente2013} proposed a path planning tool that converts an irregular area into a grid graph and generates near-optimal trajectories for UAVs by minimizing the number of turns in their paths.
\textcite{Nam2016} used grid-based decomposition and wavefront algorithm for a single UAV and considered not only the number of turns but also the route length.
\textcite{Balampanis2017} 
proposed a method to decompose and partition a complex coastal area and generate a list of waypoints for multiple heterogeneous Unmanned Aircraft Systems (UAS) to cover the area. In their method, the search area was partitioned into several subregions, with each sub-region assigned to a UAS for coverage. Furthermore, each subregion was decomposed into a grid of triangular cells, for the designated UAS to follow a moving pattern that connects the borders of the sub-area to the inner regions.
\textcite{Bouzid2017} converted a quadrotor optimal coverage planning problem in areas with obstacles into a Traveling Salesman Problem (TSP) that minimizes the overall energy consumption, and used Genetic Algorithms (GA) to solve it. 
\textcite{Cabreira2019} proposed the Energy-aware Grid-based Covering Path Planning Algorithm (EG-CPP) to minimize the energy consumption of UAVs for covering irregular-shaped areas. They stated that solely considering turns inadequately approximates the energy consumption of UAV paths. Consequently, they improved the cost function proposed in \textcite{Valente2013}, which previously only accounted for the number of turns. This enhancement resulted in a $17\%$ energy saving in real flight tests.
\textcite{Cho2021} considered heterogeneous UAVs in polygon-shaped areas in maritime search and rescue. They proposed a grid-based area decomposition method to convert an area into a graph, and formulated a MIP model on the graph to find the coverage path with minimal completion time. Moreover, they introduced a randomized search heuristic (RSH) algorithm to shorten the computation time for large-scale instances while preserving a small optimality gap. 
In their subsequent work \textcite{Cho2022}, the authors compared the use of hexagonal cells and square cells in a MIP framework, and concluded that the former is more effective for generating paths to minimize the coverage time. The exponential growth in computational complexity with the size of the search area was observed in their experiments. 
In this paper, we also use a MIP model as comparison baseline, but develop an exact algorithm instead of a heuristic one for our main contribution. 
\textcite{Ai2021} used reinforcement learning in coverage path planning for a vessel agent, in which wind and water flow data were collected to make a probability map of the true location of the target in the grid. The authors showed that their approach outperforms some of the previous approaches in three simulated scenarios. \textcite{Song2022} presented a method for UAV path planning, focusing on visiting important points within the search area instead of exhaustive full coverage. They decomposed the area into a grid and assign a score to each cell by using a classifier, generating a new map called heatmap. They tried to enhance the efficiency of rescue operations by incorporating this heatmap into path planning because of the limited endurance of UAVs and the constrained rescue time following a disaster. They proposed three path planning algorithms, conducting comparative analyses among them and an existing algorithm. One of the proposed algorithms demonstrated superior performance; however, none of the algorithms could ensure optimality. \textcite{ahmed2023energy} developed two methods, a greedy algorithm and Simulated Annealing (SA), to address the CPP problem for multiple UAVs. They formulated the problem as a MIP model to minimize energy consumption. They showed for small-scale cases CPLEX had better performance, but in large-scale problems, SA outperformed others to minimize the overall energy consumption.

The contributions of our study are summarized as follows. First, for the multi-UAV CPP problem we present a mathematical formula to calculate the lower bound of the coverage time in windy conditions.  
Second, we propose a constructive algorithm that can rapidly generate near-optimal concurrent coverage paths for large rectangular grids, and prove the solution's near optimality via an exhaustive analysis of data scenarios that may be encountered in the algorithm's execution. 
Third, we conduct through numerical experiments of varying scales to demonstrate the computational advantage and broad usability of the proposed algorithm in addressing complex search and rescue problems utilizing multiple UAVs.

\begin{figure}
    \centering
    \includegraphics[width=\textwidth]{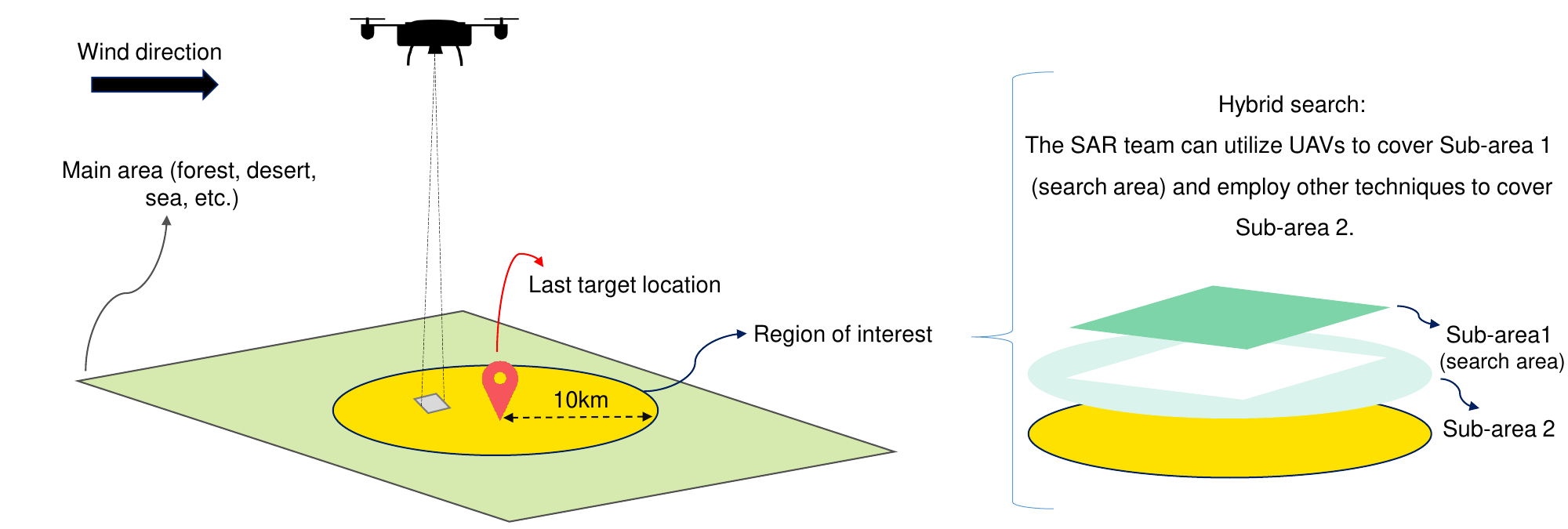}

    \caption{Suppose a SAR team receives a report of a missing mid-age hiker whose last contact was about two hours ago.
 Assuming an average human walking speed of 5 km/h, the radius of the region of interest would be approximately 10 km.
}
    \label{fig:hybrid-search}
\end{figure}

\section{Problem Formulation} \label{sec:problem description}
Consider a typical scenario of searching for a missing person in rural, mountainous and otherwise scantly inhabited areas: The SAR team received a call for help with locating a mid-age hiker whose last approximate location was recorded by a cellular tower two hours ago. By assuming an average walking speed of 5 km/h in this terrain, the team postulated a circular area of 10 km radius that must be searched as quickly as possible. To expedite the search, mixed resources would be utilized, including a fleet of UAVs to cover the central (presumably hard-to-reach) region and canine and on-foot squads to sweep in from the peripheral areas simultaneously.  The division of tasks is demonstrated in Fig. \ref{fig:hybrid-search}. 
In this context, we are concerned with planning the flight paths for the UAVs in a uniform wind field, with the objective of covering (i.e., having a clear aerial image of) all locations in the central rectangular area in minimal time. The UAVs are assumed identical and hence fly at the same airspeed. Collision avoidance among these UAVs must be explicitly considered.

\begin{figure}[h]
    \centering
    \subfigure(a){\includegraphics[width=0.38\textwidth]
    {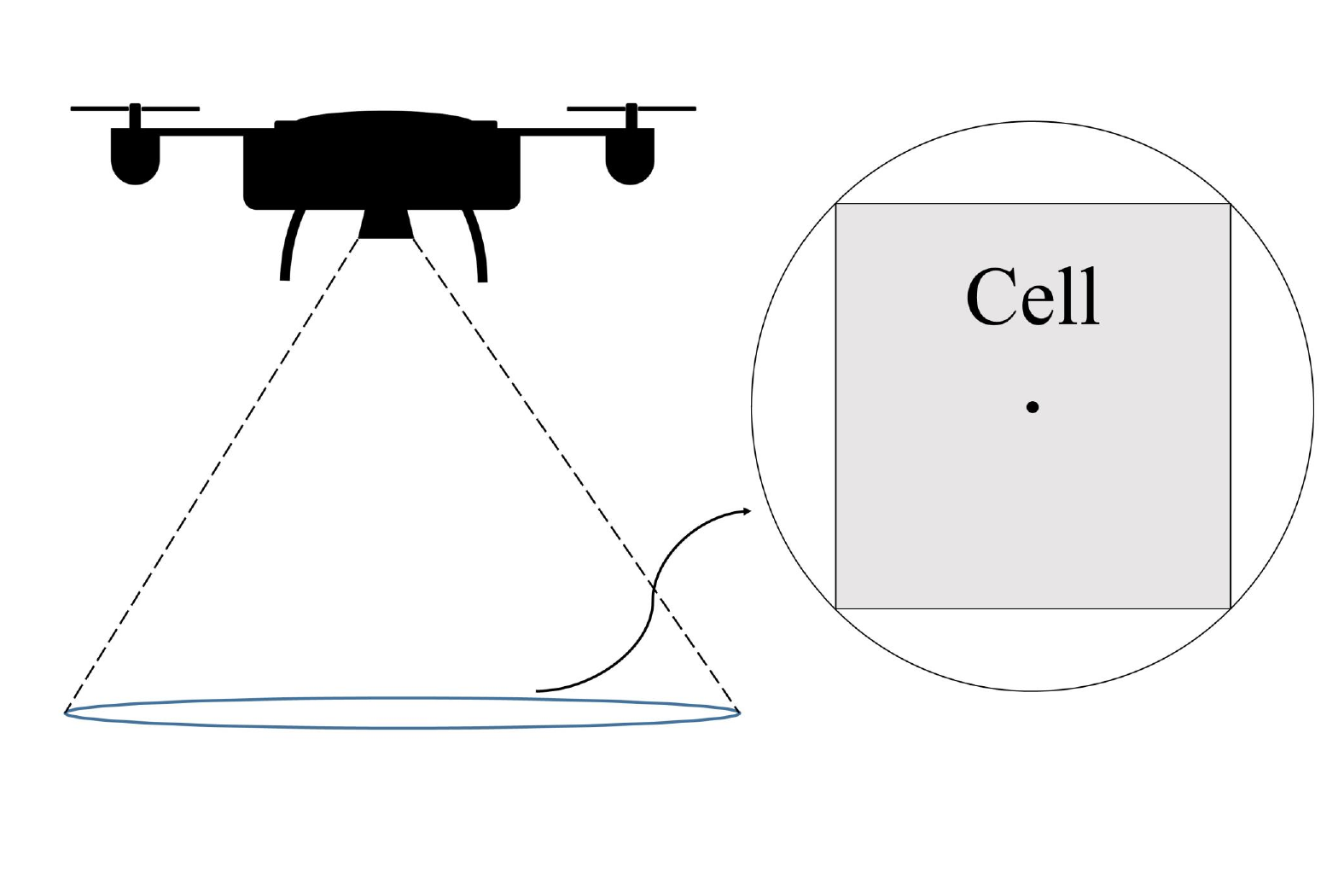}} 
    \hspace{0.4 cm}
    \subfigure(b){\includegraphics[width=0.53\textwidth]{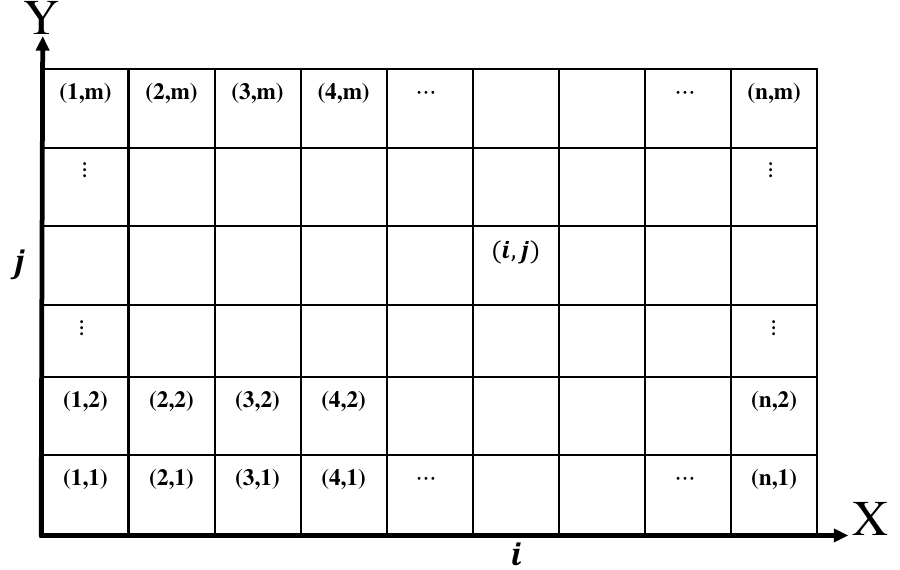}} 
    \caption{(a) The camera's field of view, aligned with a cell in the grid; (b) Cell indexing convention for an ($n \times m$) rectangular search area.}
    \label{fig:footprint&grid}
\end{figure}

\subsection{Grid-based decomposition}\label{gridphase}
To tackle this problem, we first decompose the search area into equal-sized square tiles (or cells) so that each tile fits in the image scope of the UAV's downward facing camera. The actual side length of the tile, together with the flight altitude of the UAV, is determined by factors such as the field-of-view (FoV) angle of the camera, the camera's resolution and the required representation resolution (centimeters of the ground surface per pixel), as demonstrated in Fig. \ref{fig:footprint&grid}a. Pertinent to our modeling framework, each cell is denoted by its central coordinates; for example, cell $(i, j)$ signifies that its central coordinates are located at $(i, j)$, as depicted in Fig. \ref{fig:footprint&grid}b. 

In addition, the following assumptions about the search area is made:
\begin{enumerate}
  \item[] \textbf{Assumption 1:} The $n \times m$ search area represents a rectangular region consisting of $n$ cells along its length (X-axis) and $m$ cells along its width (Y-axis), as depicted in Fig. \ref{fig:footprint&grid}b, and $m$ is greater than or equal to the number of UAVs, denoted by $q$.
  \label{assumption1}
  
  \item[]  \label{assumption2} \textbf{Assumption 2:} 
  The wind direction is parallel to the X-axis of the search area, flowing from west to east, as shown in Fig. \ref{fig:typesMove}. 
  It is noteworthy that the search area's orientation can be arbitrary. Consequently, we can incorporate this assumption into our model without loss of generality.
\end{enumerate}

In the designated search area, two connectivity types are considered. First, the Von Neumann neighborhood enables a UAV to visit adjacent cells labeled as $1, 3, 5$, and $7$ (as shown in Fig. \ref{fig:typesMove}). Second, the Moore neighborhood allows the UAV to visit all eight neighboring cells. Throughout the subsequent sections, our analysis and proposed solution are based on the utilization of the Von Neumann neighborhood. In Sect. \ref{sec:Moore connectivity proof}, we demonstrate the validity of our proposed solution even when employing the Moore neighborhood as the selected connectivity type.
\begin{figure}[h]
    \centering    \includegraphics[width=0.8\textwidth]{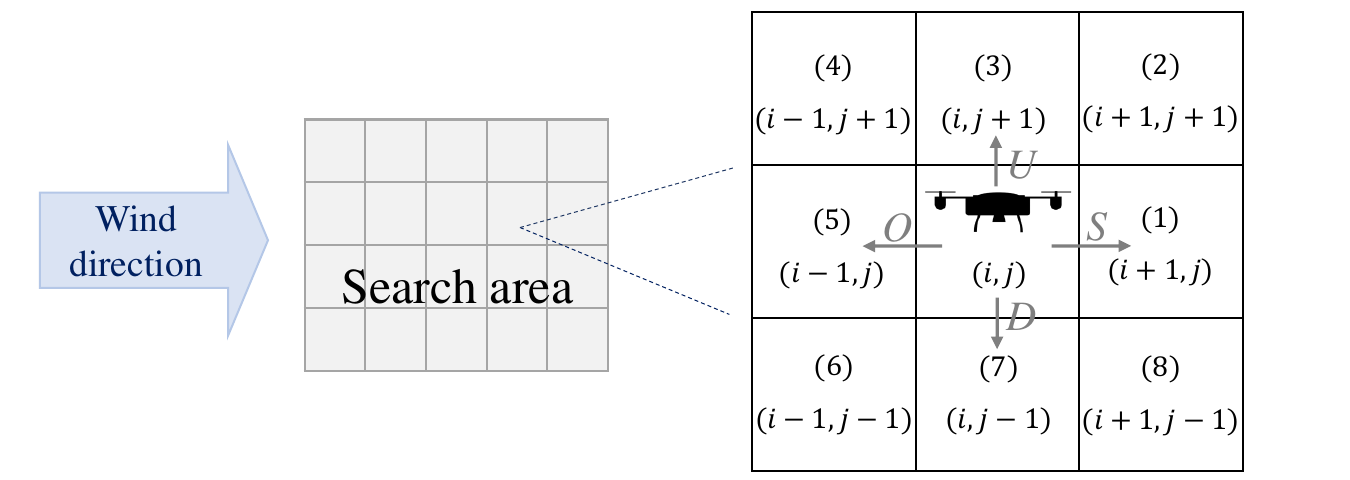}
    \caption{Illustration of cell labeling for neighbors under Assumption 2.}
    \label{fig:typesMove}
\end{figure}

Before continuing to the mathematical formulation, let us define some terms used frequently in the rest of the paper. The term ``$S$ move" means a UAV goes from one cell to its neighboring cell in the wind's direction (cell labeled 1 in Fig. \ref{fig:typesMove}). Conversely, ``$O$ move" refers to a UAV moving from a cell to its neighbor opposite to the wind (cell labeled 5 in Fig. \ref{fig:typesMove}). ``$U$ move" denotes a UAV moving upward to its upper neighbor (cell labeled 3 in Fig. \ref{fig:typesMove}), and ``$D$ move" represents the same action but in the downward direction (cell labeled 7 in Fig. \ref{fig:typesMove}). Additionally, the term ``$P$ move" signifies UAV transitions to neighboring cells perpendicular to the wind's direction (cells labeled 3 or 7 in Fig. \ref{fig:typesMove}); in other words, it's either a $U$ move or a $D$ move. The term \emph{Mission Time} denotes the duration required for a UAV to visit its assigned cells for coverage. Also, the term \emph{Operation Time} represents the duration required for a fleet of UAVs to fully cover a search area. Table \ref{tab:Consistent notation} defines the parameters that are used consistently throughout the paper. 

\begin{table}[h]
\caption{Parameters that define a problem instance} \label{tab:Consistent notation} 
\begin{tabular}{@{}ll@{}}
\toprule
Notations & \\
\midrule
$n$ & Number of cells along the length (X-axis) of the search area\ \\
$m$ & Number of cells along the width (Y-axis) of the search area\ \\
$q$ & Number of UAVs\\
$T_s$ &  Time required for a UAV to perform an $S$ move\\
$T_p$ & Time required for a UAV to perform a $P$ move\\
$T_o$ & Time required for a UAV to perform an $O$ move\\
\botrule
\end{tabular}

\end{table}

\subsection{Mathematical formulation - a baseline}\label{sec:Mathematical mode}

We first formulate the problem as a mixed-integer programming (MIP) model utilizing the notations provided in Table \ref{tab:notation}. This model helps us to characterize the data, decisions and their relations more rigorously, and will serve as a baseline for validating computational results.  \\

\begin{table}[h]
\caption{MIP model notations}\label{tab:notation}
\begin{tabular*}{\textwidth}{@{}ll@{}}
\toprule
Set and parameters & \\
\midrule
 $I$ & Set of X-coordinates for cells\\
    $J$ & Set of Y-coordinates for cells\\
    $K$ & Set of UAVs\\
    $S$ & Set of indices $\{1,2,\ldots,n\times m\}$ for movement steps\\
    \midrule
    Variables & For $i \in I, j \in J, k \in K,\  \text{and}\  s \in S$ where applicable \\
    \midrule
     $x_{kijs}$ & $1$ if UAV $k$ visits cell ($i,j$) in step $s$, otherwise $0$ \\
    $y_{ks}$ &  1 if UAV $k$ is located outside the area in step $s$, otherwise 0\\
    $p_{ks}$ &  $1$ if UAV $k$ moves perpendicular to the wind direction\\
    &from step $s-1$ to $s$, otherwise $0$\\
    $r_{ks}$ &  $1$ if UAV $k$ moves in the wind direction from step $s-1$ to $s$, otherwise $0$\\
    $l_{ks}$ & $1$ if UAV $k$ moves against the wind direction\\
    &from step $s-1$ to $s$, otherwise $0$\\
    $t_\text{opr}$ & Operation time, total time to cover the whole area\\
    \botrule
\end{tabular*}

\end{table}

\begin{align}
    &Min &&t_\text{opr} &&
    \label{eq:mip_obj}\\
    &s.t. &&\sum_{i \in I} \sum_{j \in J} x_{kijs} + y_{ks}=1 \ \ \ \scriptstyle \forall k \in K ,\  \forall s \in S
     \label{eq:mip_c1}\\
    & && \sum_{k \in K} \sum_{s \in S} x_{kijs} =1 \ \ \ \scriptstyle \forall i \in I,\  \forall j \in J
    \label{eq:mip_c2}\\
     & && x_{kijs}-(x_{k(i-1)j(s-1)}+x_{k(i+1)j(s-1)}+x_{ki(j-1)(s-1)}+x_{ki(j+1)(s-1)}) \leq 0 \nonumber\\
     & &&\scriptstyle \forall i \in I,\  \forall j \in J,\  \forall k \in K,\  \forall s \in S \setminus \{1\}
      \label{eq:mip_c3}\\
     & &&x_{kijs}-(y_{k(s+1)}+x_{k(i-1)j(s+1)}+x_{k(i+1)j(s+1)}+x_{ki(j-1)(s+1)}+ \nonumber\\
     & &&x_{ki(j+1)(s+1)}) \leq 0 \ \ \ \scriptstyle \forall i \in I,\  \forall j \in J,\  \forall k \in K,\  \forall s \in S \setminus \{|S|\} 
     \label{eq:mip_c4}\\
    & && \sum_{j \in J} x_{kij(s-1)}+\sum_{j \in J} x_{k(i+1)js}-1 \leq r_{ks} \ \ \ \scriptstyle \forall i \in I \setminus \{|I|\} ,\  \forall s \in S\setminus \{1\},\  \forall k \in K
    \label{eq:mip_c5}\\
    & && \sum_{j \in J} x_{kij(s-1)}+\sum_{j \in J} x_{k(i-1)js}-1 \leq l_{ks} \ \ \ \scriptstyle \forall i \in I \setminus \{1\} ,\  \forall s \in S \setminus \{1\},\  \forall k \in K
    \label{eq:mip_c6}\\
    & && \sum_{i \in I} x_{kij(s-1)}+\sum_{i \in I} x_{ki(j+1)s}-1 \leq p_{ks}
     \ \ \ \scriptstyle \forall j \in J\setminus \{|J|\} ,\  \forall s \in S\setminus \{1\},\  \forall k \in K
     \label{eq:mip_c7}\\
     & &&   \sum_{i \in I} x_{kij(s-1)}+\sum_{i \in I} x_{ki(j-1)s}-1 \leq p_{ks}
     \ \ \ \scriptstyle \forall j \in J\setminus \{1\} ,\  \forall s \in S\setminus \{1\},\  \forall k \in K
     \label{eq:mip_c8}\\
    & && p_{ks}+r_{ks}+l_{ks}+y_{ks}=1
     \ \   \scriptstyle \forall k \in K ,\  \forall s \in S\setminus \{1\}
     \label{eq:mip_c12}\\
    & && t_\text{opr} \geq \  T_p\ .\sum_{s \in S \setminus \{1\}} p_{ks} + \  T_s\ . \sum_{s \in S \setminus \{1\}} r_{ks}+  \  T_o\ . \sum_{s \in S \setminus \{1\}} l_{ks}
    \ \ \ \scriptstyle \forall k \in K  
    \label{eq:mip_c14}
\end{align}
The objective (\ref{eq:mip_obj}) minimizes the operation time, and Constraint (\ref{eq:mip_c1}) ensures that each UAV is either positioned within a cell or outside of the search area.  Constraint (\ref{eq:mip_c2}) guarantees the singular visitation of each cell. Constraint (\ref{eq:mip_c3}) states that a UAV can enter a cell only from its neighboring cells, and Constraint (\ref{eq:mip_c4}) indicates that a UAV can only move to adjacent cells or leave the search area. Constraints (\ref{eq:mip_c5}) and (\ref{eq:mip_c6}) indicate moves aligned with and against the wind direction, respectively; also, Constraints (\ref{eq:mip_c7}) as well as (\ref{eq:mip_c8}) specify moves perpendicular to the wind direction. According to Constraint (\ref{eq:mip_c12}), a UAV within the search area cannot simultaneously execute two or more types of moves.
Finally, Constraint (\ref{eq:mip_c14}) states that the operation time must be greater than or equal to the maximum of UAVs' mission time, which is calculated on the right-hand side of the inequality.

\section{Main Method}\label{sec: solution approach}

In general, MIP problems are intractable and the convergence of branch-and-bound type of algorithms takes exponentially longer as the instance size grows. The above MIP model is no exception - commercial solvers running the above MIP model can only solve small-scale instances (see Section \ref{sec:small_experiments}).

We describe an ingenious approach to significantly expedite the search for the optimal solution. While our algorithm cannot guarantee to always converge to the optimal solution, we can, however, accurately quantify gap between the found solution and the optimal solution. In fact, the optimality gap resulted from our algorithm will be shown to always take one of two values, zero and $T_p$. Therefore, as the search grid grows in size (and hence the optimal value which represents the total time it takes to cover all cells), the relative gap will diminish to zero.

Notations pertinent to the following discussion are defined in Table \ref{tab:prop_notations}. 
Let us examine an equivalent formulation for the above Multiple UAV Coverage Path Planning (MUCPP) problem in the $n\times m$ search area. This step is essential to facilitate the subsequent development and implementation of our proposed solution.
\begin{table}[h]
 \caption{MUCPP model and propositions notations}
    \label{tab:prop_notations}
\begin{tabular*}{\textwidth}{@{}ll@{}}
\toprule
Symbol & Meaning\\
\midrule
$\bar{C}$ & Set of cells in the $n \times m$ search area, \{$(1,1),(1,2),...,(n,m)$\} \\ 
 $N_c$ & Set of neighboring cells for cell $c$ where $c\in \bar{C}$, $|N_c|\leq 4$\\ 
 $p_k$ & Sequence of $k$ ordered cells, $(c_{(1)},c_{(2)},...,c_{(k)})$, in which\\
 & $c_{(i)}\in \bar{C}\  \text{for}\  i=1,2,3,...,k$, $c_{(i+1)}\in N_{c_{(i)}}\ \text{for}\ i=1,2,...,k-1$, called path of length $k$\\
 $p_k^{\prime}$ & Set of cells within path $p_k$ \\
 $P_k$ & Set of all paths of length $k$\\
 $P$ & Set of all paths, $\cup_k P_k$\\
 $t_{c,c^{\prime}}$ & Time taken to move from cell $c$\ to cell $c^{\prime}$, where $c\in \bar{C}$ and $c^{\prime} \in N_c$ ($t_{c,c^{\prime}} \in \{T_s,T_o,T_p\}$)\\
 $t_{p_k}$ & Time taken to cover cells in path $p_k$ by a UAV, equal to $\sum_{i=1}^{k-1} t_{c_{(i)},c_{(i+1)}}$ for $p_k \in P_k$\\
 $d_i$ & Number of cells assigned to UAV $i$, where $i \in \{1,2,\dots,q\}$\\
 $\vec{v}_a$&  2-D vector representing the airspeed of the UAV\\
 $\vec{v}_w$ & 2-D vector representing the wind speed\\
 $\vec{v}_g$ & 2-D vector representing the ground speed of the UAV\\
\botrule
\end{tabular*}

\end{table}

 \begin{align}
 &\text{MUCPP}: && \nonumber\\
    & \min_{p_{d_1},p_{d_2},...,p_{d_q}} && \max_{i \in \{1,2,...,q\}}\  t_{p_{d_i}}  &&&\label{eq:obj-MUC}\\
   &s.t. &&  \  p_{d_i}^{\prime} \cap p_{d_j}^{\prime} = \varnothing   &&& \forall i,j \in \{1,2,\dots,q\}\  \text{and}\  i\neq j 
   \label{eq:c1-MUC}\\
   & && \  \cup_{i=1}^q p_{d_i}^{\prime} = \bar{C}  
   \label{eq:c2-MUC}\\
   & &&  \  p_{d_i} \in P &&& \forall i \in \{1,2,\dots,q\}
   \label{eq:c3-MUC}
\end{align}   

The objective (\ref{eq:obj-MUC}) is to minimize the maximum time taken by UAVs to cover cells in their assigned path. Constraint (\ref{eq:c1-MUC}) ensures that paths do not intersect and that all cells are visited at most once. Constraint (\ref{eq:c2-MUC}) requires that the solution must cover all cells, and Constraint (\ref{eq:c3-MUC}) ensures the connectivity of UAVs' paths.
To derive a lower bound for the MUCPP problem, denoted as ``LB", we relax Constraints (\ref{eq:c1-MUC}) and (\ref{eq:c2-MUC}) to create a relaxed version of the problem called R-MUCPP (Relaxed Multiple UAVs Coverage Path Planning).
\begin{center}
 \begin{align}
    &\text{R-MUCPP}: && &&& \nonumber\\
    &\min_{p_{d_1},p_{d_2},...,p_{d_q}}  && \max_{i\in \{1,2,...,q\}}\  t_{p_{d_i}} &&& 
    \label{eq:obj-RMUC}\\
   &s.t. && \sum_{i=1}^{q} d_i = nm &&& d_i \in \mathbb{N}
   \label{eq:c1-RMUC}\\
    &  && p_{d_i} \in P &&& \forall i \in \{1,2,\dots,q\}
    \label{eq:c2-RMUC}
\end{align}   
\end{center}

In the R-MUCPP problem, the objective (\ref{eq:obj-MUC}) and Constraint (\ref{eq:c3-MUC}) are inherited from the original MUCPP formulation. Constraint (\ref{eq:c1-RMUC}) requires that the total number of cells assigned to UAVs must be equal to the number of cells within the search area. If a solution satisfies Constraints (\ref{eq:c1-MUC}) and (\ref{eq:c2-MUC}), it is evident that it will also satisfy Constraint  (\ref{eq:c1-RMUC}). In the next step, Section \ref{sec:LBproof} presents two propositions that form the basis for obtaining the solution to the R-MUCPP problem. The objective function of R-MUCPP will serve as LB for MUCPP.
\subsection{Lower bound derivation} \label{sec:LBproof}
Let $\Bar{D}$ represent the distance between the centers of two adjacent cells, and assume $\|\vec{v}_a\| = v_a$,  $\|\vec{v}_w\| = v_w$, and $v_a > v_w \geq 0$. Then, the values of $T_s$, $T_p$, and $T_o$ can be calculated using Eq. (\ref{eq:defTspo}).
\begin{center}
\begin{align}
 &T_s=\frac{\Bar{D}}{v_a+v_w}
 &&T_p=\frac{\bar{D}}{\sqrt{v_a^2-v_w^2}}
 &&T_o=\frac{\bar{D}}{v_a-v_w} \label{eq:defTspo}
\end{align}
\end{center}
Furthermore, we have $T_s\leq T_p \leq T_o$ because of $v_a+v_w \geq \sqrt{v_a^2-v_w^2} \geq v_a-v_w$. 
To introduce and prove the LB for the MUCPP problem, we need to establish some preliminary results.

\begin{lemma}\label{lemma:0}
Let $A$, $B$, $K$, $H$, and $N$ be non-negative real numbers satisfying $A \le B \le K$, $A + K \ge 2B$, and $H \ge N$. Then, the optimal value for $(x_1,x_2,x_3)$ in the following linear programming problem is $(x_1^*,x_2^*,x_3^*)=(N,H-N,0)$.
\begin{center}
    \begin{align}
        & \min && Ax_1+Bx_2+Kx_3  \label{eq:lem1obj} \\
        & s.t. && x_1-x_3\leq N  \label{eq:lem1c1}\\
        & && x_1+x_2+x_3=H \label{eq:lem1c2} \\
        &  && x_i\ge 0 && i \in \{1,2,3\}
    \end{align}
\end{center}

\begin{proof}
\[
\begin{split}
& \min_{x_1, x_2, x_3 \ge 0} A x_1 + B x_2 + K x_3 \\
= & \min_{x_1, x_3 \ge 0} A x_1 + B (H - x_1 - x_3) + K x_3 \\
= & \min_{x_1, x_3 \ge 0} (A-B) x_1 + (K-B) x_3 + BH \\
\ge & \min_{x_3 \ge 0} (A-B) (N + x_3) + (K-B) x_3 + BH \\
= & \min_{x_3 \ge 0} (A+K-2B) x_3 + (A-B)N + BH \\
= & AN + (H-N)B \\
\end{split}
\]
where the first equality is by applying (\ref{eq:lem1c2}), the second equality is by collecting terms, the inequality is by (\ref{eq:lem1c1}) and the assumption that $(A-B) \le 0$, and the last step is because of $A + K - 2B \ge 0$ and the nonnegativity of $x_3$. The lower bound of the objective value is attained when $x_3 = 0$ and the third line holds at equality, i.e., when $x_1$ is equal to  $x_3 + N$.  The solution $(x_1, x_2, x_3) = (N, H-N, 0)$ is feasible and makes the objective function achieves a valid lower bound, therefore, it is optimal.  
\end{proof}
\end{lemma}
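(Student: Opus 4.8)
The plan is to turn this into a two-variable linear program by using the equality constraint to eliminate one variable, and then to bound the objective from below through a short chain of inequalities in which each step is driven by the sign of one of the reduced coefficients. The sign structure is forced by the hypotheses: $A \le B$ gives $A - B \le 0$, $B \le K$ gives $K - B \ge 0$, and $A + K \ge 2B$ gives $A + K - 2B \ge 0$. These three facts, together with $H \ge N$, are exactly the ingredients needed to close the argument.

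First I would invoke the equality (\ref{eq:lem1c2}) to write $x_2 = H - x_1 - x_3$ and substitute it into the objective, reducing it to $(A-B)x_1 + (K-B)x_3 + BH$, a linear function of $x_1$ and $x_3$ over the region cut out by (\ref{eq:lem1c1}), namely $x_1 - x_3 \le N$, together with $x_1, x_3 \ge 0$ and $x_1 + x_3 \le H$ (the last inheriting $x_2 \ge 0$). The key move is to use the constraint (\ref{eq:lem1c1}) in the form $x_1 \le N + x_3$: since $A - B \le 0$, multiplying this inequality by $A - B$ reverses it, giving $(A-B)x_1 \ge (A-B)(N + x_3)$, so the objective is at least $(A+K-2B)x_3 + (A-B)N + BH$. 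Because $A + K - 2B \ge 0$ and $x_3 \ge 0$, the $x_3$ term can be dropped to produce the lower bound $(A-B)N + BH = AN + (H-N)B$.

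Finally I would verify that the proposed point $(x_1, x_2, x_3) = (N, H-N, 0)$ is feasible and attains this bound. Feasibility is immediate: constraint (\ref{eq:lem1c1}) reads $N - 0 \le N$, the equality (\ref{eq:lem1c2}) holds, and the only nontrivial nonnegativity requirement, $x_2 = H - N \ge 0$, is precisely the hypothesis $H \ge N$. Evaluating the objective at this point gives $AN + (H-N)B$, matching the lower bound, so the point is optimal.

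The single place requiring care, rather than a genuine obstacle, is the direction of the inequality in the second paragraph: because $A - B$ is nonpositive, pushing $x_1$ up to its constraint boundary $N + x_3$ \emph{decreases} the term $(A-B)x_1$, which is what makes the substitution yield a valid lower bound instead of an upper bound. Phrasing the whole argument as a chain of inequalities on the objective value, rather than as a search for the minimizer, has the added benefit that it remains correct in the degenerate situations $A = B$ or $A + K = 2B$, where the optimum is not unique; optimality of the stated point then follows without any separate case analysis.
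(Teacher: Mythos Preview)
Your argument is correct and follows essentially the same route as the paper's proof: eliminate $x_2$ via the equality constraint, use $A-B\le 0$ together with $x_1\le N+x_3$ to bound the objective below by $(A+K-2B)x_3 + (A-B)N + BH$, drop the nonnegative $x_3$ term, and then verify that $(N,H-N,0)$ is feasible and attains the bound. Your treatment is in fact slightly more explicit than the paper's in checking $x_2 = H-N \ge 0$ and in flagging the degenerate cases.
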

\newtheorem{prop}{Proposition}
\begin{prop}\label{prop:1}
The time taken to fly a path of length $d$ is at least $T$, defined as\\

$
    T= 
\begin{cases}
    (d-1) T_s& \  \text{if } d \leq n\\
    (n-1) T_s+(d-n) T_p              & \  \text{if } d > n
\end{cases}
$
\end{prop}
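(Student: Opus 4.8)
The plan is to classify the $d-1$ moves of a length-$d$ path by type and reduce the resulting minimization to the linear program already solved in Lemma~\ref{lemma:0}. First I would let $n_s$, $n_o$, and $n_p$ denote the numbers of $S$, $O$, and $P$ moves executed along the path, so that $n_s + n_o + n_p = d-1$ and the travel time is exactly $n_s T_s + n_o T_o + n_p T_p$. Since $T_s \le T_p \le T_o$, the case $d \le n$ is immediate: every move costs at least $T_s$, hence the time is at least $(d-1)T_s$, which is the claimed bound.

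For the case $d > n$ I would exploit a geometric constraint coming from the finite width of the grid in the wind direction. Each $S$ move increases the X-coordinate by one, each $O$ move decreases it by one, and each $P$ move leaves it unchanged; therefore the net horizontal displacement between the first and last cell equals $n_s - n_o$. Because every cell has its X-coordinate in $\{1,\dots,n\}$, this displacement cannot exceed $n-1$, giving the inequality $n_s - n_o \le n-1$. Minimizing the travel time over all admissible move counts is then precisely the problem in Lemma~\ref{lemma:0} with the identification $x_1=n_s$, $x_2=n_p$, $x_3=n_o$, $A=T_s$, $B=T_p$, $K=T_o$, $N=n-1$, and $H=d-1$.

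It remains to verify the hypotheses of Lemma~\ref{lemma:0} and to invoke it. The ordering $A \le B \le K$ is just $T_s \le T_p \le T_o$, and $H \ge N$ holds because $d > n$. The only nonobvious condition is $A + K \ge 2B$, i.e.\ $T_s + T_o \ge 2T_p$; I would check this directly from the closed forms in Eq.~(\ref{eq:defTspo}), where $T_s + T_o = \frac{2\bar{D}\,v_a}{v_a^2 - v_w^2}$ and $2T_p = \frac{2\bar{D}}{\sqrt{v_a^2 - v_w^2}}$, so the inequality reduces to $v_a \ge \sqrt{v_a^2 - v_w^2}$, which holds since $v_w \ge 0$. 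Lemma~\ref{lemma:0} then yields the minimum value $AN + (H-N)B = (n-1)T_s + (d-n)T_p$. Finally, since the move counts of any genuine path form a feasible (integer) point of this program, the actual travel time is at least this minimum, establishing the bound for $d > n$.

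I expect the main obstacle to be the displacement argument rather than the algebra: one must confirm that $n_s - n_o \le n-1$ is the correct and only structural constraint needed, and that passing from the integer move counts to the continuous program of Lemma~\ref{lemma:0} only weakens the objective, so that the LP optimum remains a valid lower bound on every realizable path.
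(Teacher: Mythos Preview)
Your proposal is correct and follows essentially the same route as the paper: both classify the $d-1$ moves by type, handle $d\le n$ trivially, and for $d>n$ reduce to Lemma~\ref{lemma:0} with the identifications $(x_1,x_2,x_3)=(n_s,n_p,n_o)$, $(A,B,K)=(T_s,T_p,T_o)$, $N=n-1$, $H=d-1$, verifying $T_s+T_o\ge 2T_p$ from Eq.~(\ref{eq:defTspo}). Your justification of the constraint $n_s-n_o\le n-1$ via net horizontal displacement is in fact a bit cleaner than the paper's phrasing, but the argument is otherwise the same.
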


\begin{proof}
We know that 
\begin{center}
    \begin{align*}
       & p_{d}=(c_{(1)},c_{(2)},...,c_{(d)})\\
       & t_{p_{d}}=\sum_{i=1}^{d-1} t_{c_{(i)},c_{(i+1)}}\\
       & t_{c_{(i)},c_{(i+1)}}\in \{T_s,T_p,T_o\}
    \end{align*}
\end{center}
Consider the first state ($d\leq n$), since $t_{c_{(i)},c_{(i+1)}}$ can take three values, the minimum of the $t_{p_{d}}$ would be attained if we choose $T_s$ for $t_{c_{(i)},c_{(i+1)}}$, for all $i \in \{1,\dots,d-1\}$. Hence, in this state, the mission time, $t_{p_{d}}$, cannot be less than $(d-1) T_s$.

In the second case, for a clearer explanation, the formula to calculate $t_{p_d}$ can be written as $t_{p_d}=T_s x_1+T_p x_2+T_o x_3$, where $x_1$, $x_2$, and $x_3$ are defined as the number of $S$ moves, $P$ moves, and $O$ moves, respectively, with the constraint $x_1+x_2+x_3=d-1$.

By the definitions given in Eq. (\ref{eq:defTspo}), we have 
\[
T_s + T_o = \frac{2v_a\bar{D}}{v_a^2-v_w^2} \ge \frac{2\bar{D}}{\sqrt{v_a^2-v_w^2}} = 2 T_p
\]
Furthermore, we have $T_s \le T_p \le T_o$ by assumption. Therefore, Lemma \ref{lemma:0} can be applied; for this purpose, as it is mentioned before, consider $x_1$ as the number of $S$ moves, $x_2$ as the number of $P$ moves, $x_3$ as the number of $O$ moves, $T_s$ as $A$ or $x_1$ coefficient, $T_p$ as $B$ or $x_2$ coefficient, and $T_o$ as $K$ or $x_3$ coefficient.
In addition, it is supposed that the length of the area has $n$ cells, which means the summation of the $S$ moves and $O$ moves should always be less than $n-1$ to guarantee that the UAV will stay in the search area, it can be stated $x_1-x_3 \leq n-1$. It is clear that $x_1+x_2+x_3=d-1$ due to the length of the path, and additionally, it is considered $d-1$ as $H$ and $n-1$ as $N$ in Lemma \ref{lemma:0}.
Based on the Lemma \ref{lemma:0} solution, the optimal values are $x_1^*=n-1, x_2^*=d-n, x_3^*=0$; therefore, the minimum possible time for a path of length $d$ is $(n-1) T_s+(d-n) T_p$.
\end{proof}

\begin{lemma}\label{lemma:1}
 For any positive integer $N$, if we express it as the sum of $d$ non-negative integers, then the largest summand in the summation cannot be smaller than $\lceil \frac{N}{d} \rceil$.     

\begin{proof}
    Assume for contradiction that the largest component is equal to $Q := \lceil \frac{N}{d} \rceil - k, (k \geq 1)$. There is a $\alpha \in [0,1) $ such that $\lceil \frac{N}{d} \rceil = \frac{N}{d}+\alpha$, so $Q$ can be written as $Q=\frac{N}{d}+\alpha-k$. All of the $d$ summands are less than or equal to $Q$, so their sum cannot exceed $dQ = N +d(\alpha-k)$, a quantity strictly less than $N$. This is a contradiction.  
\end{proof}
\end{lemma}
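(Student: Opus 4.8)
The plan is to prove this via a simple averaging (pigeonhole) argument, which I find cleaner than reasoning by contradiction. Write the decomposition as $N = a_1 + a_2 + \cdots + a_d$ with each $a_i$ a non-negative integer, and let $M = \max_{1 \le i \le d} a_i$ denote the largest summand. The goal is to show $M \ge \lceil N/d \rceil$.

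First I would establish the elementary fact that the maximum of a finite list of numbers is never smaller than their arithmetic mean. Indeed, if every $a_i$ were strictly below the mean $N/d$, their sum would be strictly below $d \cdot (N/d) = N$, contradicting $\sum_{i=1}^{d} a_i = N$. Hence $M \ge N/d$.

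The second and final step converts this bound on a real average into a bound involving the ceiling. Since $M$ is an \emph{integer} satisfying $M \ge N/d$, and $\lceil N/d \rceil$ is by definition the \emph{smallest} integer that is at least $N/d$, it follows immediately that $M \ge \lceil N/d \rceil$, which is the claim.

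The argument is essentially routine, so I do not anticipate a genuine obstacle; the only point demanding a little care is the transition from $M \ge N/d$ to $M \ge \lceil N/d \rceil$, which is exactly where the integrality of the summands is used and where the case $d \nmid N$ must be handled correctly. An equivalent route, closer to a direct contradiction, would assume $M \le \lceil N/d \rceil - 1$ and use the strict inequality $\lceil N/d \rceil - 1 < N/d$ (a consequence of $\lceil N/d \rceil < N/d + 1$) to deduce $N = \sum_{i=1}^{d} a_i \le dM \le d(\lceil N/d \rceil - 1) < N$, a contradiction. I would present the averaging version as the main proof, since it avoids introducing the auxiliary quantity $Q$ and exposes most transparently why the bound $\lceil N/d \rceil$ is the right one.
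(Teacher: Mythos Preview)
Your proof is correct and follows essentially the same averaging/pigeonhole idea as the paper's: the paper argues by contradiction that if the maximum were $\lceil N/d\rceil - k$ with $k\ge 1$ then the total would fall short of $N$, which is exactly your alternative contradiction route with the auxiliary $\alpha$ made explicit. Your direct version (max $\ge$ mean, then use integrality to pass to the ceiling) is a slightly cleaner packaging of the same argument.
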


Now, we can state a key proposition as follows. 

\begin{prop}\label{prop:3}
  Let $n,m,q$ be positive integers satisfying $q\le m$, then the time that is required to cover all cells of the $n\times m$ search area by $q$ UAVs cannot be less than $(n-1)T_s+(\lceil\frac{nm}{q}\rceil-n) T_p$.
\end{prop}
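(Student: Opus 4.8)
The plan is to combine the two preceding results---Proposition~\ref{prop:1} and Lemma~\ref{lemma:1}---with the fact that R-MUCPP is a relaxation of MUCPP. Since the optimal value of R-MUCPP is precisely the quantity we call LB, and since every MUCPP-feasible solution is also R-MUCPP-feasible (it satisfies $\sum_i d_i = nm$), it suffices to show that \emph{every} feasible solution of R-MUCPP attains an objective value at least $(n-1)T_s+(\lceil nm/q\rceil-n)T_p$. The minimum over all feasible solutions is then no smaller, and because $\mathrm{opt}(\text{R-MUCPP})\le \mathrm{opt}(\text{MUCPP})$, this lower-bounds the true coverage time.

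First I would fix an arbitrary feasible assignment of path lengths $(d_1,\dots,d_q)$ with $\sum_{i=1}^q d_i = nm$ and each $d_i\in\mathbb{N}$. Applying Lemma~\ref{lemma:1} with $N=nm$ and $d=q$ produces an index $i^\ast$ for which $d_{i^\ast}\ge \lceil nm/q\rceil$; that is, at least one UAV must be assigned a number of cells no smaller than this ceiling. Because the R-MUCPP objective is the maximum of the $t_{p_{d_i}}$, it is bounded below by the single term $t_{p_{d_{i^\ast}}}$, so it is enough to bound this term.

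Next I would use the hypothesis $q\le m$ to place $d_{i^\ast}$ in the correct regime of Proposition~\ref{prop:1}. Since $q\le m$ gives $nm/q\ge n$, we get $\lceil nm/q\rceil\ge n$ and hence $d_{i^\ast}\ge n$. I would then apply Proposition~\ref{prop:1} to the path of length $d_{i^\ast}$ and split into two cases. If $d_{i^\ast}=n$, then necessarily $\lceil nm/q\rceil=n$, and Proposition~\ref{prop:1} gives $t_{p_{d_{i^\ast}}}\ge (n-1)T_s$, which coincides with the target bound because its $T_p$ term vanishes. If $d_{i^\ast}>n$, Proposition~\ref{prop:1} gives $t_{p_{d_{i^\ast}}}\ge (n-1)T_s+(d_{i^\ast}-n)T_p$, and since $d_{i^\ast}\ge\lceil nm/q\rceil$ and $T_p\ge 0$, this is at least $(n-1)T_s+(\lceil nm/q\rceil-n)T_p$. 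In both cases the target bound holds, completing the argument.

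The reasoning is mostly bookkeeping, so the main obstacle is conceptual rather than computational: one must route the two earlier results in the right order---Lemma~\ref{lemma:1} to force the existence of a UAV carrying at least $\lceil nm/q\rceil$ cells, then Proposition~\ref{prop:1} to convert that cell count into a time bound---and, crucially, verify that $q\le m$ really does guarantee $\lceil nm/q\rceil\ge n$ so that the second (dominant) branch of Proposition~\ref{prop:1} is the relevant one. The degenerate boundary case $d_{i^\ast}=n$ should be handled explicitly to confirm the formula specializes correctly.
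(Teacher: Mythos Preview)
Your proposal is correct and follows essentially the same route as the paper: apply Lemma~\ref{lemma:1} to guarantee some UAV receives at least $\lceil nm/q\rceil$ cells, then invoke Proposition~\ref{prop:1} to convert that cell count into the time bound. You are in fact more careful than the paper, which does not explicitly check that $q\le m$ forces $\lceil nm/q\rceil\ge n$ (so that the second branch of Proposition~\ref{prop:1} applies) nor separately treat the boundary case $d_{i^\ast}=n$.
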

\begin{proof}
It is clear that the time required to fully cover the $n\times m$ search area by $q$ UAVs, i.e., operation time, must be greater than or equal to the maximum mission time of the UAVs. Based on Lemma \ref{lemma:1}, we establish that the maximum number of cells assigned to UAVs must be no less than $\lceil \frac{nm}{q} \rceil$. Consequently, in accordance with Proposition \ref{prop:1}, the maximum mission time of the UAVs cannot be less than $(n-1)T_s+(\lceil\frac{nm}{q}\rceil-n) T_p$.
\end{proof}
According to Proposition \ref{prop:3}, the optimal solution of the R-MUCPP problem, acting as the lower bound of the MUCPP problem (LB), can be derived using the following formula: $(n-1)T_s + (\lceil\frac{nm}{q}\rceil - n) T_p$ (referred to as the ``LB formula" throughout the rest of this paper).

In Sect. \ref{sec:propositional algorithm}, we will employ the LB formula as a key insight in the development of the solution algorithm for the MUCPP problem.

 \subsection{Near-optimal path planning algorithm (NOPP)}\label{sec:propositional algorithm}

We develop a constructive algorithm that consistently generates feasible solutions with an objective value in the set $\{\text{LB},\text{LB}+T_p\}$, where LB represents the lower bound established for the MUCPP problem (explained in Sect. \ref{sec:LBproof}). 
Our proposed algorithm comprises four distinct phases, and we employ the notations detailed in Table \ref{tab:algorithm notation} to explain its functionality within the $n \times m$ search area. 
In this section, we describe the four phases of the algorithm and leave the detailed analysis and proof of its near optimality to the Appendix.

\begin{table}[]
\caption{Notations used in the NOPP algorithm} \label{tab:algorithm notation} 
\begin{tabular*}{\textwidth}{@{}ll@{}}
\toprule
Symbol & Meaning \\
\midrule
$F$ & Set of uncovered cells\\
$C$ & Arbitrary cell, in the search area, Its coordinates (coordinates of the center)\\
&are indicated by $(c_x, c_y)$\\
$\mathit{NP}$ & Number of available (remaining) $P$ moves for a UAV\\
$\mathit{NU}$ & Number of $U$ moves made by a UAV\\
$\mathit{ND}$ & Number of $D$ moves made by a UAV\\
$A$ & Number of cells assigned to a UAV to cover\\
$\mathit{SP}$ & Cell from which a UAV starts its path, referred as \emph{Starting point}\\
$\Bar{O}$ & Binary variable denotes the parity of the initial value of $\mathit{NP}$: True for odd, False for even \\
$c_x$ & X-coordinate of cell $C$\\
$c_y$ & Y-coordinate of cell $C$\\
$path$ & Sequence of cells' coordination traversed and covered by a UAV\\
$path^{\ast}$ & Sequence of moves executed by a UAV, elements must be ``U", ``D", or ``S" such that:\\
&\ \ \ $\bullet$ ``U" means $U$ move\\
&\ \ \ $\bullet$ ``D" means $D$ move\\
&\ \ \ $\bullet$ ``S" means $S$ move\\
$\cup^{\ast}$ & Operator between two sequences with the following definition: let $S_1$ and $S_2$ be two \\
&sequences such that $S_1=(a_1,a_2,\dots,a_z)$ and $S_2=(b_1,b_2,\dots,b_w)$, for $z,w \in \mathbb{N}$; then, \\
& $S_1 \cup^{\ast} S_2=(a_1,a_2,\dots,a_z,b_1,b_2,\dots,b_w)$\\
\botrule
\end{tabular*}
\footnotetext{Note: For further clarification regarding $path$ and $path^\ast$, let's consider a scenario in which a UAV starts from the bottom-left corner of Fig. \ref{fig:footprint&grid}b at cell $(1,1)$ and covers the subsequent four cells above it. In this case, $path$ is defined as $((1,1),(1,2),(1,3),(1,4),(1,5))$, while $path^\ast$ corresponds to the UAV's four $U$ moves, represented as $(\text{``U"},\text{``U"},\text{``U"},\text{``U"})$.}
\end{table}

\subsubsection{Initial settings}\label{sec:initial setting}

Initial values of the variables involved in the algorithm are set as follows: $A$ and $\mathit{NP}$ are assigned $\lceil \frac{nm}{q} \rceil$ and $\lceil \frac{nm}{q} \rceil -n$ , respectively (derived from Proposition \ref{prop:3}). Additionally, the variable $\bar{O}$ is assigned either True or False based on the parity of $\mathit{NP}$. Furthermore, the starting point $\mathit{SP}$ for the $i^{th}$ UAV, where $\forall i \in \{1,2,\dots,q\}$, is at the coordinate $(1,m-q+i)$. Lastly, the set $F$ encompasses all cells within the $n \times m$ search area, except for those that have already been covered.

\subsubsection{Phase one}\label{sec:phase1}

In this phase, the UAV's path (a sequence of cell coordinates covered by the UAV) starts from the starting point determined in the Sect. \ref{sec:initial setting}. In each iteration, the algorithm evaluates the feasibility of the $D$, $S$, and $U$ moves in order, and selects the first feasible move. In addition to being unvisited, a cell must satisfy the following conditions to be a feasible destination for the next move: (1) The UAV cannot move to a cell whose Y-coordinate is greater than that of the starting point, referred as \emph{Y-coordinate condition} in the rest of the paper; (2) the total number of $P$ moves (the sum of $D$ and $U$ moves) made by a UAV must be less than or equal to  $\lceil \frac{nm}{q} \rceil - n$, based on Proposition \ref{prop:3}; and (3) if $\bar{O}$ is True, the maximum count of $U$ moves is constrained to be one less than the count of $D$ moves.
The pseudo code for Phase 1 is given in Algorithm \ref{alg:phaseOne}.

\begin{algorithm}
\caption{Phase one of NOPP}\label{alg:phaseOne}
\begin{algorithmic}[1]
\algnotext{EndIf}
\algnotext{EndFor}
\State \textbf{function} \ phaseOne($\mathit{SP},\mathit{NP},F,A,\bar{O}$)
\State Initialize $path$\ and \ $path^\ast$ to be empty sequence, $\mathit{NU} \gets 0, \ \mathit{ND} \gets 0$, $C \gets \mathit{SP}$
\State $s_{2} \gets \text{the second element (Y-coordinate) of } \mathit{SP}$, $path \gets path \cup^{\ast} (C)$, and $F \gets F \backslash \{C\}$ 
\For{ $1\  \text{through}\  A$}
\If{$(c_x,c_y-1) \in F\  \text{and}\ \mathit{NP} >0$}
\State $C \gets (c_x,c_y-1) $
\State $path \gets path \cup^{\ast} (C)$, $F \gets F \backslash \{C\}$, $path^\ast \gets path^\ast \cup^{\ast} (\text{``D"})$
\State $\mathit{NP} \gets \mathit{NP} - 1,\  $ $\mathit{ND} \gets \mathit{ND}+1$
\ElsIf{$(c_x+1,c_y) \in F\ $}
\State $C \gets (c_x+1,c_y)$
\State $path \gets path \cup^{\ast} (C)$, $F \gets F \backslash \{C\}$, $path^\ast \gets path^\ast \cup^\ast (\text{``S"})$ 
\ElsIf{$(c_x,c_y+1) \in F\  \text{and}\  \mathit{NP}>0\  \text{and}\  \mathit{NU}<\mathit{ND} \ \text{and } c_y+1\le s_{2}$ }

\If{$\Bar{O}$}
\If{$\mathit{NU}<\mathit{ND}-1$}
\State $C \gets (c_x,c_y+1)$
\State $path \gets path \cup^{\ast} (C)$, $F \gets F \backslash \{C\}$, $path^\ast \gets path^\ast \cup^\ast (\text{``U"})$
\State  $\mathit{NP} \gets \mathit{NP} - 1,\ \mathit{NU} \gets \mathit{NU}+1$
\EndIf
\Else
\State $C \gets (c_x,c_y+1)$
\State $path \gets path \cup^{\ast} (C)$, $F \gets F \backslash \{C\}$, $path^\ast \gets path^\ast \cup^\ast (\text{``U"})$ 
\State $\mathit{NP} \gets \mathit{NP} - 1,\ \mathit{NU} \gets \mathit{NU}+1$
\EndIf
\EndIf
\EndFor
\State \textbf{return} $path,path^\ast,F,\mathit{NP}$
\end{algorithmic}
\end{algorithm}
\subsubsection{Phase two}
\label{seq:pahseTwo}
This phase can be executed only when the $path^*$ (generated by Phase 1) concludes with a sequence of $U$ moves, denoted as $U^*$. Derived from the structure of Phase 1, a sequence of $S$ moves consistently precedes $U^*$, termed as $S^*$. We define the \emph{H-value} as $|S^*| - 1$ when the move prior to $S^*$ is a $D$ move, and as $|S^*|$ otherwise. Furthermore, if the $path^*$ does not conclude with a sequence of $U$ moves, the H-value assumes a value of zero, as depicted in Fig. \ref{figervalley}b. If the H-value is an odd number, Phase 2 is implemented, during which the algorithm removes $U^*$ from the $path^*$ and updates $path$, $path^{\ast}$, $F$, and $\mathit{NP}$—all of which were obtained from Phase 1. Algorithm \ref{alg:phaseTwo} shows the pseudocode for Phase 2.
\begin{algorithm}[h]
\caption{Phase two of NOPP}\label{alg:phaseTwo}
\begin{algorithmic}[1]
\algnotext{EndIf}
\algnotext{EndFor}
\State \textbf{function} \ phaseTwo$(path,path^*,F,\mathit{NP})$
\State \text{Initialize}\  $pathCopy \gets path, i \gets 0$
\State $i \gets |path^\ast|-|U^*|$
\State $path \gets$ the first $i+1 $ elements in $path$, $path^\ast \gets$ the first $i$ elements in $path^\ast$
\State $F\gets F \cup \{\text{elements of $pathCopy$ excluding the first $i+1$ elements}\}$
\State $\mathit{NP} \gets \mathit{NP}+|U^{\ast}|$
\State \textbf{return}\  $path,path^\ast,F,\mathit{NP}$
\end{algorithmic}
\end{algorithm}
\subsubsection{Phase three} \label{sec:phase3}

This phase is implemented when the count of remaining $P$ moves ($\mathit{NP}$) is greater than zero. During this phase, the algorithm traverses the $path^\ast$ to detect two specific subsequences, (``S", ``S") and (``U", ``S"), referred to as \emph{BI patterns}. It then assesses the feasibility of inserting a single $U$ move before the second move of the BI patterns. The insertion of a $U$ move is considered feasible when a UAV can move upward to visit a cell that has not been covered previously and satisfies the Y-coordinate condition (as described in Sect \ref{sec:phase1}). If this insertion is feasible, the algorithm inserts one $U$ move before the second move of the BI patterns as well as one $D$ move after that, and updates $path^\ast$ accordingly. In other words, (``S", ``S") is replaced with (``S", ``U", ``S", ``D"), (``U", ``S") is replaced with (``U", ``U", ``S", ``D") in $path^\ast$. This operation is iterated $\mathit{NP}/2$ times by the algorithm.

 It should be mentioned that as per the Phase 3 design, $\mathit{NP}$ must be an even number as the input of this phase. Hence, if $\mathit{NP}$ is an odd number after Phases 1 and 2, it is automatically incremented by one and then passed to Phase 3. This adjustment might lead the operation time to reach $\text{LB}+T_p$. The pseudocode for Phase 3 is presented by Algorithm \ref{alg:phase3}.
\begin{figure}[h]
    \centering
    \subfigure(a){\includegraphics[width=0.3\textwidth]{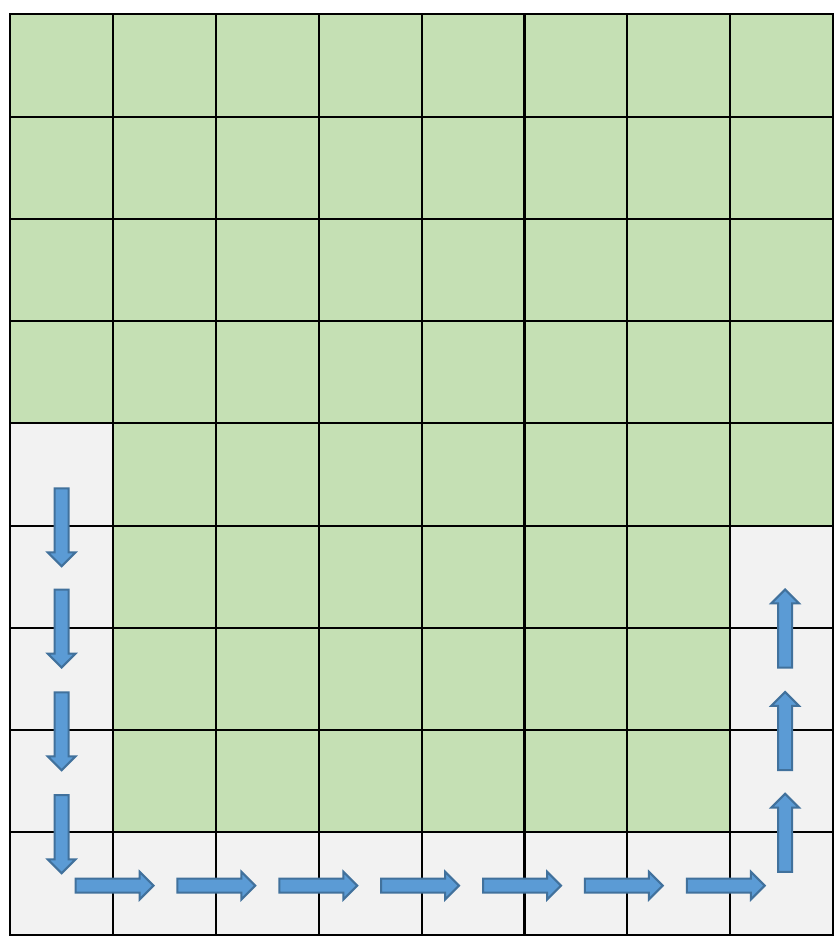}} 
    \hspace{1 cm}
    \subfigure(b){\includegraphics[width=0.3\textwidth]{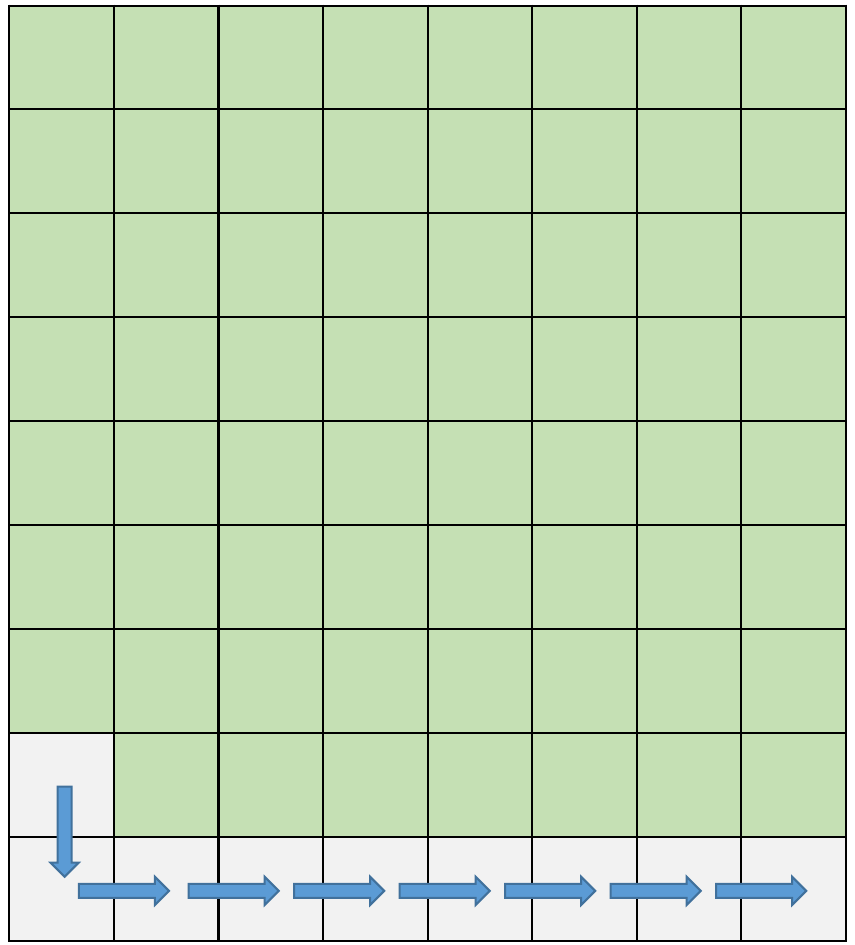}}
    \caption{Example of calculating the H-value for the first UAV after phase 1 in two cases:  (a) Due to $|S^*|=7$ and a $D$ move before $S^*$, the H-value is 6 ($n=8,\ m=9,\ q=5$); (b) Since there is no $U^*$, the H-value is 0 ($n=8,\ m=9,\ q=8$).}
    \label{figervalley}
\end{figure}
\begin{algorithm}
\caption{Phase three of NOPP}\label{alg:phase3}
\begin{algorithmic}[1]
\algnotext{EndIf}
\algnotext{EndFor}
\State \textbf{function} \ phaseThree$(\mathit{SP},path,path^*,F,\mathit{NP})$
\State \text{Initialize}\ $ I \gets 1,\  C \gets \mathit{SP} $, $s_2 \gets \text{the second element (Y-coordinate) of } \mathit{SP}$ 
\For{$1\  \text{through}\  \frac{\mathit{NP}}{2}$}
\For{$i \in \{1,\dots,|path^\ast|\}$}
\If {$i > I $ \text{and} $ i^\text{th} \ \text{element of } path^* \text{ is ``S''}$ \text{and} $ (i-1)^\text{th} \ \text{element of } path^* \in \text{\{``U'',``S"\}}$}
\State $C \gets \text{the $i^{th}$ element of } path$

\If{$(c_x,c_y+1) \in F$ and $c_y+1 \le s_2$}
\State \text{Update $path^{\ast}$ by placing a ``U'' and a ``D" before and after the $ i^\text{th} \ \text{element of } path^*$, respectively}
\State \text{Update $path$ by placing $(c_x,c_y+1)$ followed by $(c_x+1,c_y+1)$ after $C$ in $path$}
\State $F \gets F \backslash \{(c_x,c_y+1),(c_x+1,c_y+1)\}, \mathit{NP} \gets \mathit{NP}-2, I \gets i$
\State \textbf{break}
\Else
\State \textbf{continue}
\EndIf
\EndIf
\EndFor
\EndFor
\State $\textbf{return}\  path, path^*, F, \mathit{NP}$

\end{algorithmic}
\end{algorithm}
\subsubsection{Phase four}
Due to the Phase 3 design, the algorithm may generate a UAV path where some cells with an X-coordinate equal to $n$ remain uncovered, leaving them for the next UAV to cover. This is not important for other UAVs except for the last one. Since all cells must be covered, the last UAV cannot leave any cells uncovered because there are no other UAVs available to cover them. Therefore, Phase 4 is exclusive to the last UAV and operates only when $F$ is not empty. In other words, the purpose of Phase 4 is to ensure that uncovered cells are covered by the last UAV. Algorithm \ref{alg:phase4} presents the pseudocode for Phase 4.

\subsubsection{Generating paths for UAVs}

The process flow of the NOPP algorithm is illustrated in  Fig. \ref{fig:flowchart}. The different phases described above are stringed together to form the complete NOPP algorithm in the following way. 

The NOPP algorithm consists of running the above described phases in order for each UAV. 
For a given UAV $i$, Phase 1 is executed after the initial settings, then the H-value  (defined in Sect. \ref{seq:pahseTwo}) is calculated. If the H-value is odd, Phase 2 is invoked; otherwise, Phase 2 is omitted for this UAV. Subsequently, the number of remaining $P$ moves (i.e., the $\mathit{NP}$ value which is updated in Phases 1 and 2) is checked. If it exceeds zero and is an even number, Phase 3 is directly initiated, and if it is an odd number, then it is incremented to the next even number before Phase 3 is initiated. Finally, if $NP$ is zero, Phase 3 is omitted for this UAV. 
For the last UAV (i.e., $i = q$), after Phase 3 (if initiated at all), Phase 4 will be executed only if the set $F$ is not empty. This phase ensures that all uncovered cells at this point will be covered by the last UAV.

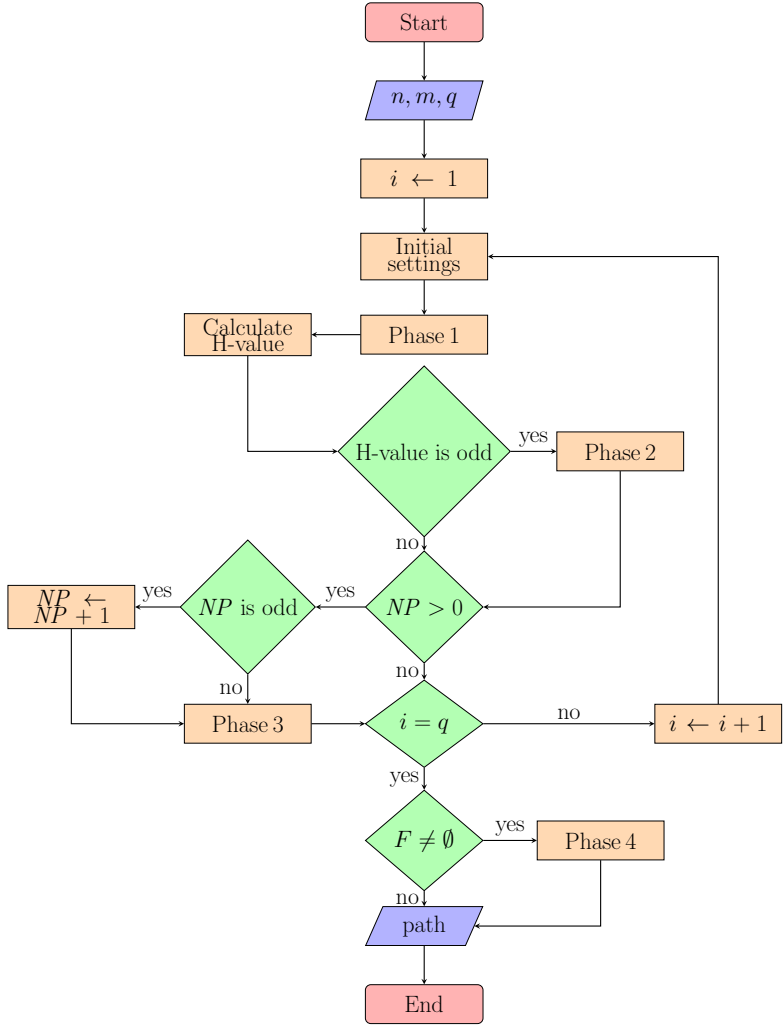
\begin{figure}[h]
\begin{adjustbox}{max height=0.55\textheight,center} 
\begin{tikzpicture}[node distance=2cm]\

\node (start) [startstop] {\Large Start};
\node (in1) [io, below of=start,yshift=-0.2] {\textbf{\Large $ n,m,q$}};
\node (pro) [process, below of=in1,yshift=-0.2] {\Large $i\leftarrow1$};
\node (pro0) [process, below of=pro,yshift=-0.2] {\Large Initial settings};
\node (pro1) [process, below of=pro0,yshift=-0.2] {\Large Phase 1};
\node (pro1-1)[process, left of=pro1,xshift=-2.5 cm] {\Large Calculate H-value};
\node (dec1) [decision, below of=pro1, yshift=-1cm] {\Large H-value is odd};
\node (pro2) [process, right of=dec1, xshift=3 cm] {\Large Phase 2};
\node (dec2) [decision, below of=dec1,yshift=-2cm] {\Large $\mathit{NP} > 0 $};
\node(dec2-1)[decision, left of=dec2,xshift=-2.5cm] {\Large $\mathit{NP}\ \text{is odd} $};
\node(pro2-1)[process, left of=dec2-1, xshift=-2.5 cm] {\Large $\mathit{NP} \gets \mathit{NP}+1$};
\node (pro3) [process, below of=dec2-1, yshift=-1 cm] {\Large Phase 3};
\node (dec3) [decision, below of=dec2,yshift=-1cm] {\Large $i=q$};
\node (pro4) [process, right of=dec3, xshift=5.5 cm] {\Large $i \leftarrow i+1$};
\node (dec4) [decision, below of=dec3,yshift=-1cm] {\Large $F\neq  \emptyset$};
\node (prop5) [process, right of=dec4, xshift=2.5 cm]{\Large Phase 4};
\node (in2) [io, below of=dec4,yshift=-0.2cm] {\Large path};
\node (end) [startstop,below of=in2,yshift=-0.2] {\Large End};

\draw [arrow] (start) -- (in1);
\draw [arrow] (in1) -- (pro);
\draw [arrow] (pro) -- (pro0);
\draw [arrow] (pro0) -- (pro1);
\draw [arrow] (pro1) -- (pro1-1);
\draw [arrow] (pro1-1) |- (dec1);
\draw [arrow] (dec1) -- node[anchor=east] {\Large no} (dec2);
\draw [arrow] (dec1) -- node[anchor=south] {\Large yes} (pro2);
\draw [arrow] (pro2) |- (dec2);
\draw [arrow] (dec2) -- node[anchor=east] {\Large no} (dec3);
\draw [arrow] (dec2) -- node[anchor=south] {\Large yes} (dec2-1);
\draw [arrow] (dec2-1) -- node[anchor=south] {\Large yes} (pro2-1);
\draw [arrow] (dec2-1) -- node[anchor=east] {\Large no} (pro3);
\draw [arrow] (pro2-1) |- (pro3);
\draw [arrow] (pro3) -- (dec3);
\draw [arrow] (dec4) -- node[anchor=south] {\Large yes} (prop5);
\draw [arrow] (dec3) -- node[anchor=east] {\Large yes}(dec4);
\draw [arrow] (dec3) --node[anchor=south] {\Large no} (pro4);
\draw [arrow] (pro4) |- (pro0);
\draw[arrow] (dec4)-- node[anchor=east] {\Large no}(in2);
\draw [arrow] (prop5) |- (in2);
\draw [arrow] (in2) -- (end);

\end{tikzpicture}
\end{adjustbox}
\vspace{0.1cm}
    \caption{The flowchart of the NOPP algorithm.}
    \label{fig:flowchart}
\end{figure}

\begin{algorithm}
\caption{Phase four of NOPP}\label{alg:phase4}
\begin{algorithmic}[1]
\algnotext{EndIf}
\algnotext{EndFor}
\State \textbf{function}\ phaseFour($path,F$)
\For{1 through $|F|$}
\State $C \gets$ the last element of $path$
\If{$(c_x,c_y+1) \in F$}
\State $path \gets path \cup^\ast ((c_x,c_y+1))$
\EndIf
\EndFor
\State $\textbf{return}\  path$

\end{algorithmic}
\end{algorithm}

Before proceeding to the next section, it's important to mention that the proposed algorithm can handle cases where $q \le n$, which are common in real-world applications. 

Note that the algorithm is developed under the implicit assumption that $q \le n$, which represents the majority of large-scale real-world cases. In cases with $q > n$ (meaning that we have more UAVs than the number of cells along the wind direction), we can decompose the main problem into multiple subproblems with $q \le n$, and apply the proposed algorithm to each subproblem with little loss of efficiency.

\section{Extension to Moore Neighborhood Connectivity} \label{sec:Moore connectivity proof}

In this section, we demonstrate the effectiveness of our proposed solution when we employ the Moore neighborhood as the selected connectivity type. To do so, we only need to prove that Proposition \ref{prop:1} holds true in the Moore neighborhood connectivity. In the beginning, we introduce several new definitions. Considering Fig. \ref{fig:typesMove}, we define an ``$F$ move" as a UAV's movement from cell $(i,j)$ to adjacent cells numbered 2 and 8. Likewise, a ``$B$ move" refers to the UAV's transition from cell $(i, j)$ to adjacent cells numbered 4 and 6. Furthermore, $T_f$ and $T_b$ denote the time necessary for executing $F$ and 
$B$ moves, respectively. 
Let $\bar{D}$ be the distance from cell $(i, j)$ to its neighboring cell numbered 1 (or equivalently 3, 5, and 7); likewise, $\sqrt{2}\bar{D}$ be the distance to neighboring cell numbered 2 (or equivalently 4, 6, and 8). Additionally, we employ the notations $\Vec{v}_a, \Vec{v}_g, \text{and } \Vec{v}_w$ (as defined in Table \ref{tab:prop_notations}). Then, we define $\|\vec{v}_a\| = v_a$, $\|\vec{v}_g\| = v_g$, and $\|\vec{v}_w\| = v_w$ and assume that $0<v_w<v_a$. \\
\begin{prop}
In the Moore neighborhood connectivity where $T_s < T_f < T_p < T_b < T_o$, the minimum time to cover $d$ cells (T), as stated in proposition \ref{prop:1}, remains to be\\

$
    T= 
\begin{cases}
    (d-1) T_s& \  \text{if } d \leq n\\
    (n-1) T_s+(d-n) T_p              & \  \text{if } d > n
\end{cases}
$

\end{prop}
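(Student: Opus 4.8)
The plan is to mirror the proof of Proposition \ref{prop:1}: recast the time of a length-$d$ path as a linear cost in the numbers of moves of each type, relax the spatial feasibility down to a single linear constraint on net eastward displacement, and then invoke Lemma \ref{lemma:0}. The only genuinely new ingredient is that Moore connectivity enlarges the move repertoire from three types to five, adding $F$ moves (cells $2,8$, an eastward diagonal) and $B$ moves (cells $4,6$, a westward diagonal). I first record the two easy reductions that collapse these five types back to the three handled before, and then re-run the Lemma \ref{lemma:0} argument with the single extra inequality it requires.

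For the upper bound I would note that every Von Neumann path is also a Moore path, so the path attaining $T$ in Proposition \ref{prop:1} (using only $S$ and $P$ moves) remains available; enlarging the action set can only lower the minimum, so the Moore optimum is $\le T$. For the lower bound, let $a,b,c,e,g$ denote the numbers of $S,O,P,F,B$ moves along a path covering $d$ cells. Then $a+b+c+e+g=d-1$, and since both endpoints lie in columns $\{1,\dots,n\}$ the net eastward displacement satisfies $a-b+e-g\le n-1$ (dropping the analogous Y-constraint only weakens the bound, which is admissible for a lower bound). In the objective $T_s a+T_o b+T_p c+T_f e+T_b g$, the variable $e$ shares the constraint column $(1,+1)$ with $a$ while costing more ($T_s<T_f$), so replacing every $F$ by an $S$ preserves feasibility and does not increase cost; likewise $b$ shares the column $(1,-1)$ with $g$ and costs more ($T_b<T_o$), so $O$ moves may be replaced by $B$ moves. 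Hence some optimal solution has $e=b=0$.

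This leaves $\min\, T_s a+T_p c+T_b g$ subject to $a+c+g=d-1$ and $a-g\le n-1$, which is exactly the program of Lemma \ref{lemma:0} with $(A,B,K)=(T_s,T_p,T_b)$, $N=n-1$, and $H=d-1$. The hypotheses $A\le B\le K$ follow from the assumed ordering $T_s<T_p<T_b$, and $H\ge N$ holds precisely in the regime $d>n$. The remaining hypothesis $A+K\ge 2B$, i.e. $T_s+T_b\ge 2T_p$, is the Moore analogue of $T_s+T_o\ge 2T_p$ used in Proposition \ref{prop:1}, and is the one step that needs real computation: deriving $T_b$ from the wind triangle (ground speed $\sqrt{v_a^2-v_w^2/2}-v_w/\sqrt2$ along the $135^\circ$ diagonal over distance $\sqrt2\,\bar D$) gives $T_b=(\sqrt{2v_a^2-v_w^2}+v_w)/(v_a^2-v_w^2)$, and after clearing the common denominator $v_a^2-v_w^2$ the inequality collapses to $v_a+\sqrt{2v_a^2-v_w^2}\ge 2\sqrt{v_a^2-v_w^2}$, which holds for all $0\le v_w<v_a$. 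Lemma \ref{lemma:0} then returns the optimizer $(a,c,g)=(n-1,\,d-n,\,0)$ with value $(n-1)T_s+(d-n)T_p$, matching the claimed $T$ for $d>n$; the case $d\le n$ is immediate, since $T_s$ is the smallest of the five move times, so any $d-1$ moves cost at least $(d-1)T_s$, attained by a run of $S$ moves.

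I expect the main obstacle to be the inequality $T_s+T_b\ge 2T_p$: unlike the ordering $T_s<T_f<T_p<T_b<T_o$, it is not assumed and must be extracted from the explicit kinematics of the diagonal $B$ move, which is where the bulk of the (elementary but careful) algebra lives. The supporting points, namely justifying that dropping the Y-constraint and performing the $F\!\to\!S$ and $O\!\to\!B$ substitutions are legitimate for a lower bound, are routine but should be stated explicitly to keep the reduction to Lemma \ref{lemma:0} airtight.
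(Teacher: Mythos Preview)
Your proposal is correct and takes a cleaner route than the paper. The paper argues by contradiction and local exchange: it supposes some Moore path beats the VN path, then reasons that the improvement ``must'' come from swapping a $P$ move for an $S$ or $F$ move, which in turn forces a compensating $B$ or $O$ move to keep the UAV in the grid; it then checks the four combinations case by case, each time reducing to the inequality $T_s+T_b>2T_p$ (established via the law of sines and a monotonicity argument for an auxiliary function $g(\gamma)$). Your argument instead relaxes directly to an LP in the five move counts, kills $F$ and $O$ by column dominance, and lands exactly on Lemma~\ref{lemma:0} with $(A,B,K)=(T_s,T_p,T_b)$, so the only new work is verifying $T_s+T_b\ge 2T_p$. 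This buys you a proof that is structurally identical to Proposition~\ref{prop:1} and avoids the paper's somewhat informal ``to do so, we need to replace'' step, which tacitly assumes single-swap improvements suffice. Your closed-form $T_b=\bar D\,(\sqrt{2v_a^2-v_w^2}+v_w)/(v_a^2-v_w^2)$ also makes the key inequality a one-liner ($1\ge\sqrt{1-x}$ and $\sqrt{2-x}\ge\sqrt{1-x}$ give $1+\sqrt{2-x}\ge 2\sqrt{1-x}$ with $x=v_w^2/v_a^2$), whereas the paper carries the trigonometric form through and appeals to the sign of a derivative. The two proofs share the same load-bearing inequality; yours packages it more efficiently.
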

\begin{proof}
    When $d \leq n$, it is evident that a path with $d-1$ sequential $S$ moves achieves the shortest time, equal to $(d-1)T_s$.
    In case where $d>n$, based on Proposition \ref{prop:1}, we know that a path with $n-1$ $S$ moves and $d-n$ $P$ moves, called \emph{VN path}, has the minimum time in Von Neumann connectivity, which is $(n-1)T_s+(d-n)T_p$. Suppose that we can create a faster path than the VN path in the Moore neighborhood connectivity. To do so, we need to replace a $P$ move in the VN path with either an $S$ move or an $F$ move because both $T_s$ and $T_f$ are less than $T_p$, thus making the path faster.

    In the first case, we examine replacing a $P$ move in the VN path with an $S$ move. This action creates a new path that is faster than the VN path, and we refer to it as the \emph{Moore path}. The Moore path consists of $n$ $S$ moves and $d-n-1$ $P$ moves, resulting in a total path time of $(n)T_s + (d-n-1)T_p$, which is less than $(n-1)T_s + (d-n)T_p$. Note 
    that the search area is composed of $n$ cells along its length, and employing $n$ $S$ moves within the Moore path results in the UAV exiting the search area. To ensure the UAV remains within the designated search area, it is imperative to substitute a $P$ move in the Moore path with either one $B$ or $O$ move. Given $T_b<T_o$, we update the Moore path by replacing a $P$ move with a $B$ move. After this modification, the Moore path consists of $n$ $S$ moves, one $B$ move, and $d-n-2$ $P$ moves, which takes $(n)T_s + T_b + (d-n-2)T_p$ to cover $d$ cells. To show that the Moore path is faster than the VN path, we have to demonstrate that $(n)T_s+T_b+(d-n-2)T_p < (n-1)T_s+(d-n)T_p$, or equivalently, $T_s+T_b-2T_p<0$. We know that $T_s$ and $T_p$ are attainable through Eq. (\ref{eq:defTspo}), so we only need to determine $T_b$ by calculating $v_g$ in a $B$ move. To find $v_g$ in a $B$ move, we define $\alpha$ as the angle between $\Vec{v}_a$ and $\Vec{v}_g$, and $\beta$ as the angle between $\Vec{v}_a$ and $- \Vec{v}_w$ (as depicted in Fig. \ref{fig:MooreProof}). In the first step, we calculate $\alpha$ using the law of sines and then attain $\beta$. So, we have:\\ 
        \[
\begin{split}
& \frac{sin(\frac{3\pi}{4})}{v_a}=\frac{sin(\alpha)}{v_w}\  \Rightarrow \ \alpha=arcsin(\frac{\sqrt{2}v_w}{2v_a})  \\
& \beta=\pi-\frac{3\pi}{4}-\alpha=\frac{\pi}{4}-arcsin(\frac{\sqrt{2}v_w}{2v_a})
\end{split}
\]\\
Then, $v_g$ in a $B$ move can be calculated by the following formula:\\
        \[
\begin{split}
& v_g=\sqrt{v_a^2+v_w^2-2 v_a v_w cos(\frac{\pi}{4}-arcsin(\frac{\sqrt{2}v_w}{2 v_a}))}
\end{split}
\]\\
Then, $T_b$ is:\\
        \[
\begin{split}
& T_b=\frac{\sqrt{2}\bar{D}}{\sqrt{v_a^2+v_w^2-2v_a v_w cos(\frac{\pi}{4}-arcsin(\frac{\sqrt{2}v_w}{2 v_a}))}}
\end{split}
\]\\
By using  $v_w=\gamma v_a$ such that $\gamma \in (0,1)$, we can write $T_s+T_b-2T_p$ as the following function:

\[
\begin{split}
& f(v_a,\gamma)=T_s+T_b-2T_p \\
& =\frac{\bar{D}}{v_a+\gamma v_a}+\frac{\sqrt{2}\bar{D}}{\sqrt{v_a^2+(\gamma v_a)^2-2\gamma v_a^2 cos(\frac{\pi}{4}-arcsin(\frac{\sqrt{2}\gamma v_a}{2 v_a}))}}-\frac{2\Bar{D}}{\sqrt{v_a^2-\gamma^2 v_a^2}}\\
&= \frac{\Bar{D}}{v_a}(\frac{1}{1+\gamma}+\frac{\sqrt{2}}{\sqrt{1+\gamma^2-2\gamma cos(\frac{\pi}{4}-arcsin(\frac{\sqrt{2}\gamma}{2}))}}-\frac{2}{\sqrt{1-\gamma^2}})
\end{split}
\]\\
For simplicity, we can write $f(v_a,\gamma)=\frac{\bar{D}}{v_a} g(\gamma)$ such that
\[
\begin{split}
    & g(\gamma)=\frac{1}{1+\gamma}+\frac{\sqrt{2}}{\sqrt{1+\gamma^2-2\gamma cos(\frac{\pi}{4}-arcsin(\frac{\sqrt{2}\gamma}{2}))}}-\frac{2}{\sqrt{1-\gamma^2}}
\end{split}
\]
By considering $g$ function, we can see that $g(0)>0$, and also, for $\gamma \in (0,1)$, the first derivative of $g$ function is positive ($g^{\prime}(\gamma)>0$). Therefore, we conclude that for $\gamma \in (0,1)$,  $g(\gamma)$ is an increasing function and always positive. Given $\frac{\bar{D}}{v_a}>0$,it is evident that $f(v_a, \gamma)>0$ for $\gamma \in (0,1)$. Therefore, $T_s+T_b-2T_p>0$ and $T_s+T_b>2T_p$ which leads to a contradiction. It means that the Moore path is slower than the VN path.

In the second case, we consider replacing a $P$ move in the VN path with an $F$ move. Through this action, we create a new path called the \emph{F-Moore path}, which we assume is faster than the VN path. Like the first case, we need to use either a $B$ or $O$ move to keep the UAV inside the search area. Since $T_b<T_o$, we replace a $P$ move in the F-Moore path with a $B$ move and update it. After this update, the F-Moore path includes $n-1$ $S$ moves, one $F$ move, one $B$ move, and $d-n-2$ $P$ moves, which takes $(n-1)T_s+T_f+T_b+(d-n-2)T_p$ to cover $d$ cells. It is clear that $T_f+T_b > T_s+T_b$, and, in the first case, we proved that $T_s+T_b > 2T_p$. Therefore, it is concluded that $T_f+T_b>2T_p$ which result in $(n-1)T_s+T_f+T_b+(d-n-2)T_p>(n-1)T_s+(d-n)T_p$ and leads to contradiction. It means that the VN path is faster than the F-Moore path.

In the next step, it is necessary to discuss using an $O$ move instead of a $B$ move in the Moore path. Suppose we utilize an $O$ move instead of a $B$ move in the Moore path to maintain the UAV within the search area. Therefore, the Moore path would include $n$ $S$ moves, one $O$ move, and $d-n-2$ $P$ moves. We can observe that the Moore path only utilizes $S$, $O$, and $P$ moves. Thus, the Moore path is one of the available paths in the Von Neumann neighborhood connectivity to cover $d$ cells. In Proposition \ref{prop:1}, we have proved that the VN path is the fastest path among all available paths in this type of neighborhood connectivity. Therefore, $(n)T_s+T_o+(d-n-2)T_p \ge (n-1)T_s+(d-n)T_p$, or equivalently, $T_s+T_o \ge 2T_p$. Now, it is straightforward to consider using an $O$ move instead of a $B$ move in the F-Moore path to keep the UAV inside the search area. By this change, the F-Moore path consists of $n-1$ $S$ moves, one $F$ move, one $O$ move, and $(d-n-2)$ $P$ moves. Given $T_f > T_s$, it is evident that $T_f + T_o > T_s + T_o$, so we can write $T_f + T_o > 2T_p$, as we have shown that $T_s + T_o \geq 2T_p$. This implies that $(n-1)T_s + T_f + T_o + (d-n-2)T_p > (n-1)T_s + (d-n)T_p$, and therefore, the VN path is still faster than the F-Moore path.

Finally, since it is not possible to create a feasible path faster than the VN path within the Moore neighborhood connectivity, Proposition \ref{prop:1} remains valid even when the Moore neighborhood connectivity is selected.
    
\end{proof}

\begin{figure}[h]
    \centering    \includegraphics[width=0.5\textwidth]{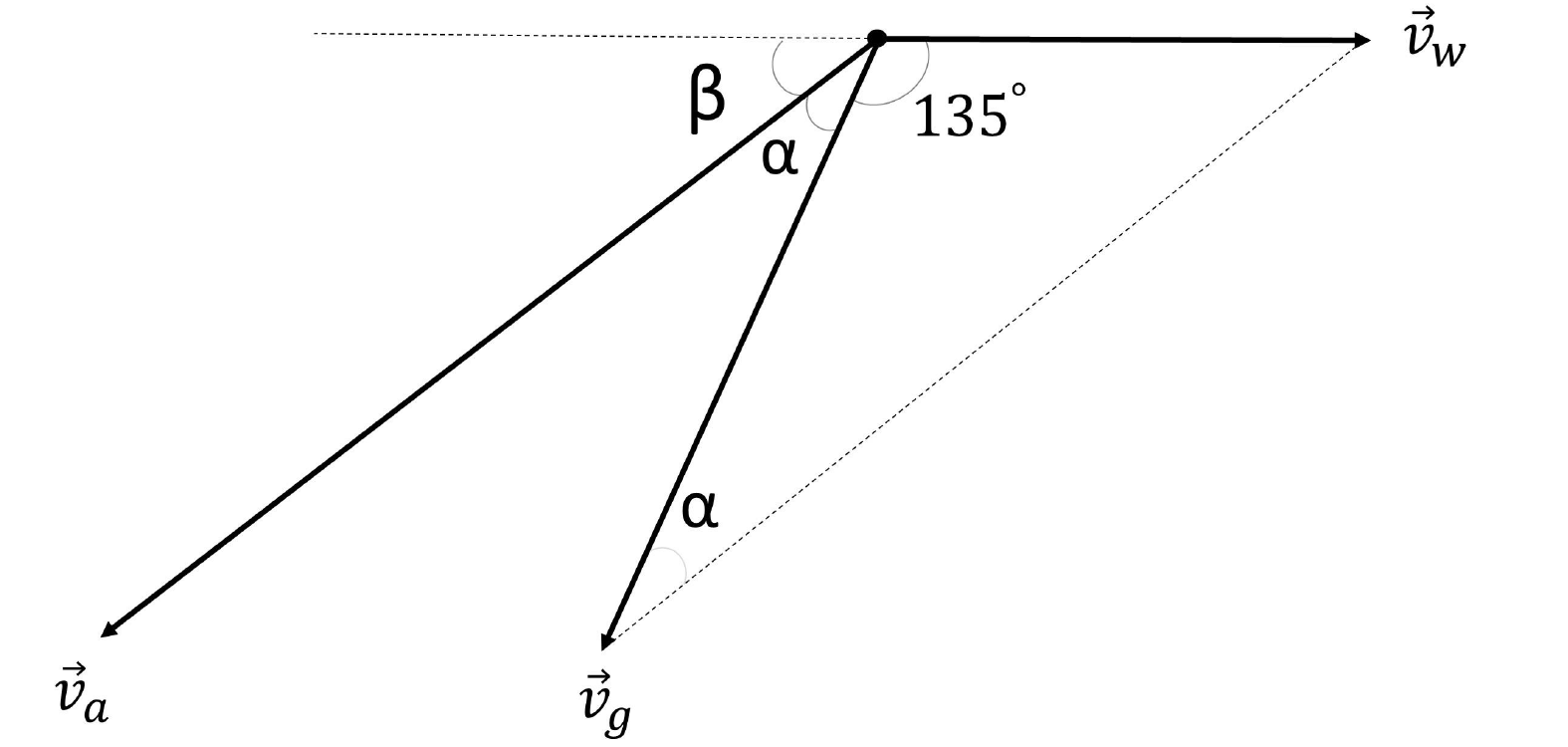}
    \caption{Vectors of the UAV's air and ground speed as well as wind speed for a $B$ move.}
    \label{fig:MooreProof}
\end{figure}

\section{Numerical Experiments} \label{sec:experimentalRes}
The experiments are performed on a PC with a Core i7-7700 processor and 32 GB RAM. The proposed algorithms are implemented in Python 3.7, whereas the MIP models are written in GAMS and solved by CPLEX. 

We first experiment on small test cases to verify the solution obtained from different approaches, and to compare the computational time. 
Then, four medium-sized instances are solved to showcase the algorithm's performance, and two of them are used to demonstrate how each phase of the NOPP algorithm works. Finally, cases of varying numbers of UAVs and different grid sizes are solved to analyze the effect of problem size on computational complexity. 

To determine cell dimensions, recent advancements in AI technology (\cite{martinez2021search}) establish target recognition capabilities at distances of 100 meters during flight at a height of 20 meters. Leveraging cutting-edge high-resolution cameras, utilizing a grid cell size of $100 \times 100$ meters square becomes a viable choice. The experiments are conducted with UAV and wind speeds set at 20 m/s and 5 m/s, respectively; therefore, transition times between adjacent cells are computed at 4 seconds, 5.16 seconds, and 6.66 seconds for $T_s$, $T_p$, and $T_o$. Codes are available at \href{https://github.com/Sina14KD/CPP-SearchRescue}{https://github.com/Sina14KD/CPP-SearchRescue}.
\subsection{Small-sized cases and Comparison with commercial solver} \label{sec:small_experiments}
In this section, we evaluate the objective function value through both the utilization of the commercial solver, CPLEX, and our proposed algorithm in scenarios comprising a total of 10 to 100 cells. Given the complexity of the model, a computational ceiling of 3600 seconds is imposed on the CPLEX, whereby the objective value is established as the minimum achieved within this temporal threshold. Also, due to the significant computational time required by CPLEX, we limit our experiments to simple scenarios that involve only two UAVs. Furthermore, the experiments are executed within an $n\times m$ search area utilizing a fleet of $q$ UAVs, and this configuration is represented as a sequence of three values in the initial column of Table \ref{tab:small-case-result}. Subsequent columns display the lower bound and results, including the objective value, computational time, and optimality gap.
\begin{table}[h]
\caption{Results for small-sized test cases}\label{tab:small-case-result}
\begin{tabular*}{\textwidth}{@{\extracolsep\fill}lcccccccc}
\toprule%
Case$(n,m,q)$&LB& \multicolumn{3}{@{}c@{}}{MIP by CPLEX} & \multicolumn{2}{@{}c@{}}{NOPP} &Optimality& Absolute\\\cmidrule{3-5}\cmidrule{6-7}%
& & $Z$ & $\text{LB}_{\mathit{CPLEX}}$ &Time(s) & $Z_{\mathit{NOPP}}$ & Time(s) & gap($\%$)& gap(s) \\
\midrule
(4,4,2)&32.64&32.64&32.64&8.32 &32.64&0.00&0.00&0.00\\
(4,5,2)&42.96&42.96&42.96&111.90 &42.96&0.00&0.00&0.00\\
(5,4,2)&41.80&41.80&41.80&52.35& 46.96&0.00&12.34&5.16\\
(5,5,2)&57.28&57.28&57.28&1507.46& 62.44&0.00&9.00&5.16\\
(6,5,2)&66.44&66.44&60.00&$>3600$& 66.44&0.00&0.00&0.00\\
(5,6,2)&67.60&67.60&60.06&$>3600$& 67.60&0.00&0.00&0.00\\
(6,6,2)&81.92&82.26&69.16&$>$3600& 81.92&0.00&0.00&0.00\\
(7,7,2)&116.88&123.22&94.24&$>$3600& 122.04&0.00&4.41&5.16\\
(8,8,2)&151.84&161.36&124.00&$>$3600& 151.84&0.00&0.00&0.00\\
(9,9,2)&197.12&403.52&158.00&$>$3600& 202.28&0.00&2.62&5.16\\
(9,10,2)&217.76&-&176.00&$>$3600&217.76&0.00&0.00&0.00\\
(10,9,2)&216.60&-&176.00&$>$3600&216.60&0.00&0.00&0.00\\
(10,10,2)&242.40&-&196.00&$>$3600& 242.40&0.00&0.00&0.00\\

\botrule
\end{tabular*}
\footnotetext{LB is obtained from LB formula in Sect. \ref{sec:LBproof}. $\text{LB}_{\mathit{CPLEX}}$ is the lower bound of the CPLEX, optimality gap = $\frac{|LB-Z_{\mathit{NOPP}}|}{LB}\times 100$, \\and absolute gap = $|LB-Z_{\mathit{NOPP}}|$.}
\end{table}
Table \ref{tab:small-case-result} reveals the substantial inefficiency of the CPLEX algorithm, even when applied to small-sized cases. In contrast, the proposed algorithm rapidly generates solutions belonging to the set $\{\text{LB}, \text{LB}+T_p\}$ across all cases.
\subsection{Medium-sized cases and visual explanation}
In this section, we investigate medium-sized cases, defined as instances comprising a cell count ranging from 100 to 1000. The outcomes for four specific cases are presented in Table \ref{tab: mid case result}, where the initial two cases are used to elucidate the algorithm's functioning. Cases three and four each encompass 1000 cells with different configurations, serving to verify that the algorithm efficiently resolves medium-sized cases within remarkably brief computational intervals. Additionally, as expected, the results demonstrate that all solutions belong to the set $\{\text{LB}, \text{LB}+T_p\}$.

\begin{table}[h]
\caption{Results for Medium-sized test cases}\label{tab: mid case result} 
\begin{tabular*}{\textwidth}{@{\extracolsep\fill}lccccccc}
\toprule%
Case no.&Size of the&Number&LB& \multicolumn{2}{@{}c@{}}{NOPP} &Optimality & Absolute \\\cmidrule{5-6}%
 & $n\times m$ search area & of UAVs & & $Z_{\mathit{NOPP}}$ & Time(s) & gap($\%$)& gap(s)\\
\midrule
1&$11\times 10$&3&174.16&179.32&0.00&2.9&5.16\\
2&$13\times 11$&4&166.68&166.68&0.00&0.00&0.00\\
3&$25\times 40$&2&2547&2552.16&0.25& $\approx 0.00$&5.16\\
4&$50\times 20$&6&779.72&779.72&0.15&0.00&0.00\\
\botrule
\end{tabular*}
\footnotetext{LB is obtained from LB formula in Sect. \ref{sec:LBproof}. Optimality gap = $\frac{|LB-Z_{\mathit{NOPP}}|}{LB}\times 100$, and absolute gap = $|LB-Z_{\mathit{NOPP}}|.$}
\end{table}
For the first case (case no. 1 in Table \ref{tab: mid case result}) depicted in Fig. \ref{fig:case1midsize}, the algorithm initially plans a path for UAV number one. Preceding the initiation of Phase 1, according to initial settings, the starting point is designated as $(1,10-3+1)$, and the total number of assigned cells ($A$) and available $P$ moves ($\mathit{NP}$) are 37 and 26 respectively. In the next step, Phase 1 starts its task and descends from the starting point until reaching $(1,1)$,  where the $D$ move becomes unavailable. Then, it continues by $S$ moves till $(11,1)$, and since the $S$ move is no longer available, it goes up until $(11,8)$. Next, the H-value is calculated, resulting in an odd value of 9. Consequently, the algorithm, during Phase 2, removes $U^*$ from the path, leaving a path comprising 7 sequential $P$ moves and 10 consecutive $S$ moves. In the subsequent step, the algorithm checks the residual count of $P$ moves ($\mathit{NP}$) that is 19 (26 minus 7), and given its odd value, the algorithm automatically applies a one-unit incremental adjustment, setting $\mathit{NP}$ to 20. Then, in accordance with the pattern established during Phase 3, the allocation of these 20 $P$ moves takes place along the remaining path from Phase 2, creating the ultimate path of the first UAV. For the next UAV or UAV number $2$, the starting point is  $(1,10-3+2)$, and the number of assigned cells and available $P$ moves remain consistent (37 and 26, respectively). Subsequently, in Phase 1, the algorithm constructs a path following the pattern of this phase, which concludes at cell $(11, 9)$. Then, The algorithm calculates the H-value, and since it is 4 (an even number), Phase 2 is omitted. In the next step, given that $\mathit{NP}$ has an even value of 12 (26 minus 14), Phase 3 accomplishes its designated task by placing 12 $P$ moves into the path. Regarding the last UAV or UAV number 3, the algorithm first executes Phases 1, 2, and 3 sequentially, and then, as the last UAV has four phases, the algorithm checks the set of uncovered cells ($F$). Finally, since there is no uncovered cell, Phase 4 is not implemented.

Figure \ref{fig:case2midsize} illustrates the algorithm's performance corresponding to the second case (case no. 2 in Table \ref{tab: mid case result}), where, for all four UAVs, phases 1, 2, and 3 operate similarly to the first case. However, unlike the first case, the $\bar{O}$ is true (because the initial value of $\mathit{NP}$ is 23), and therefore, after Phase 1 the number of $U$ moves is one less than $D$ ones. Also, after implementing Phase 3 for the last UAV, set F is not empty and includes cell (13,11). Consequently, during Phase 4, the algorithm employs a $U$ move to cover cell $(13,11)$.

\begin{figure}
\includegraphics[width=1\textwidth]{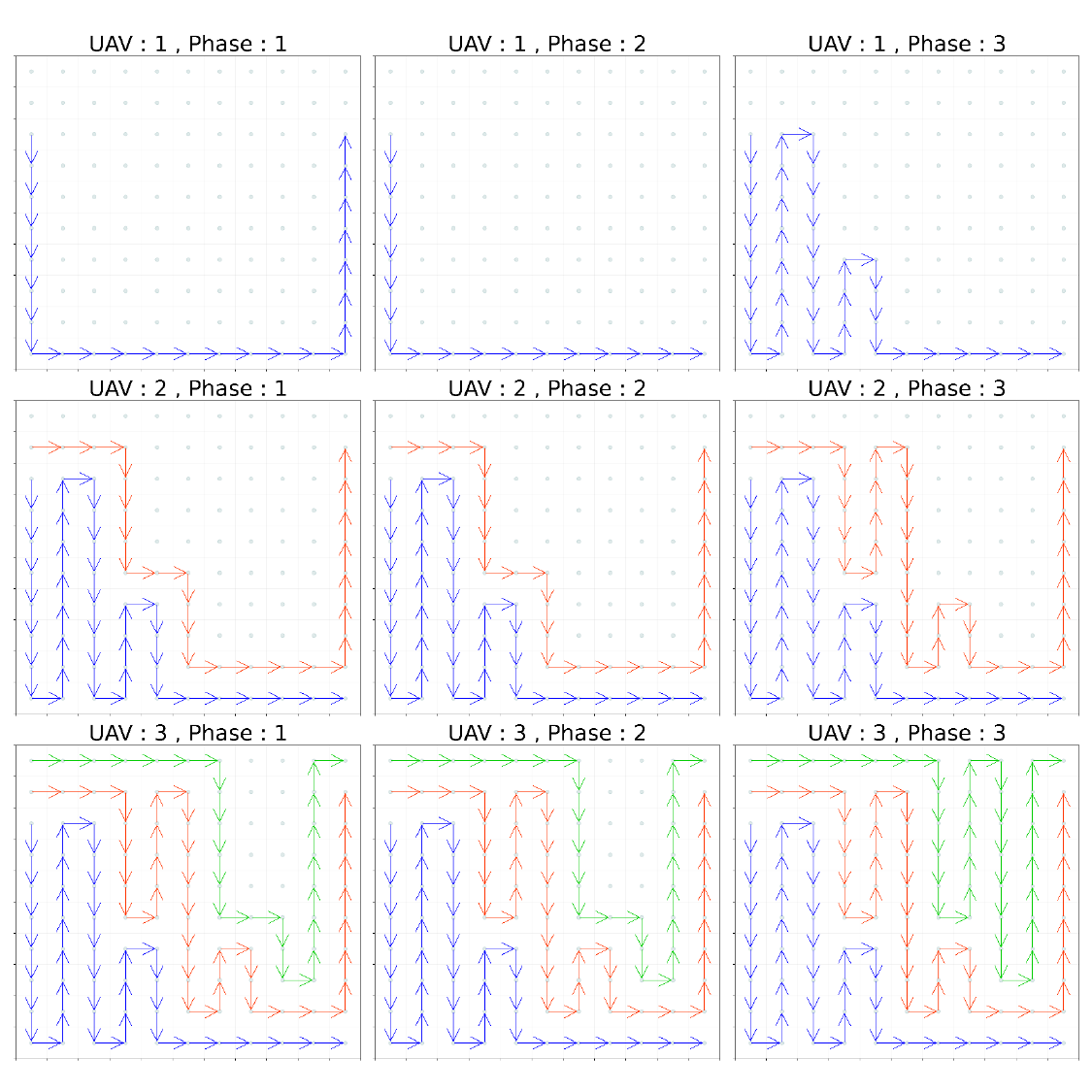}
    \caption{Demonstration of the algorithm's phases for 3 UAVs in $11\times10$ search area (Case no.\hspace{0.2em}1 in Table \ref{tab: mid case result}.)}
    \label{fig:case1midsize}
\end{figure}
\begin{figure}
\includegraphics[width=1\textwidth]{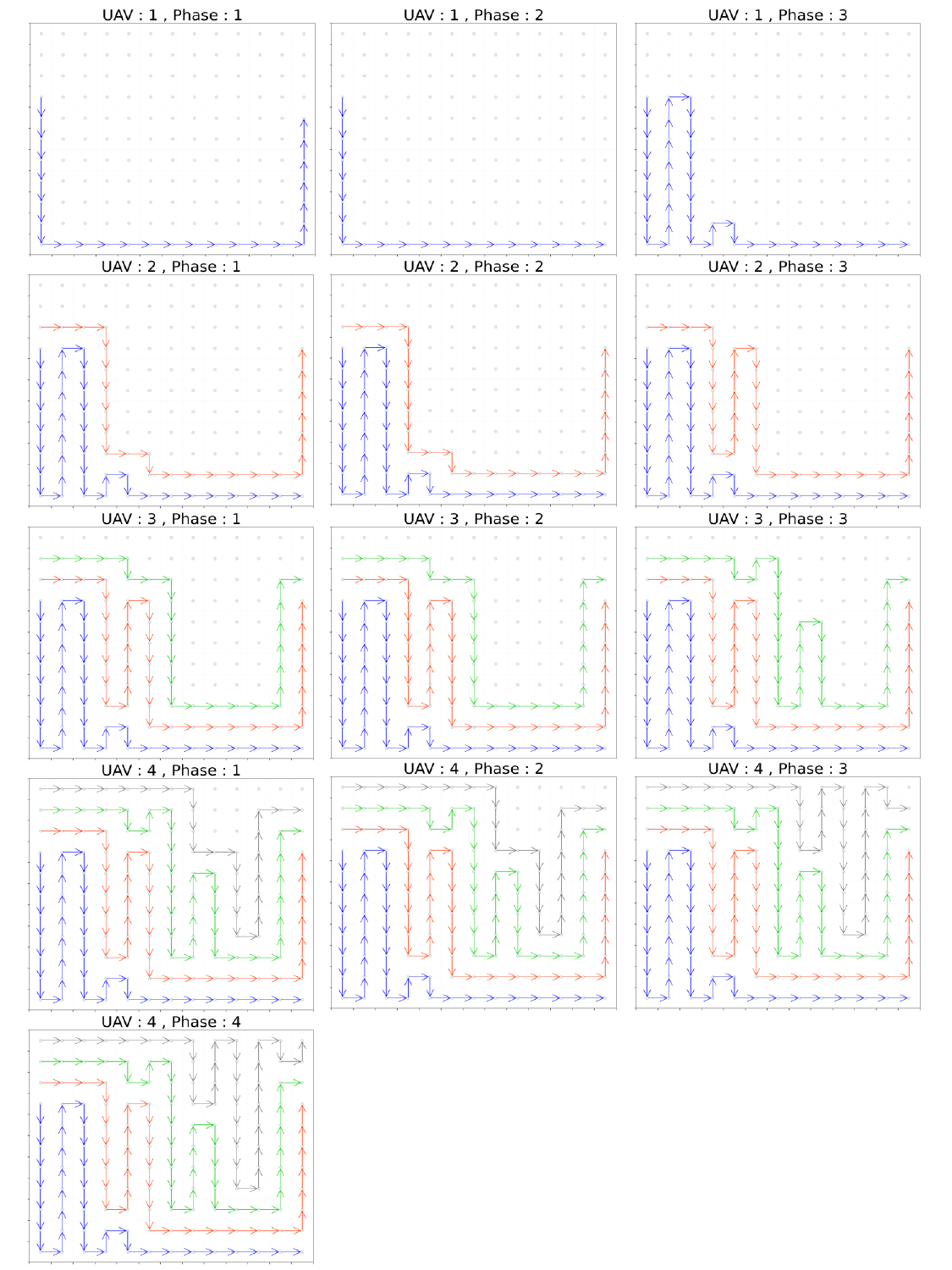}
    \caption{Demonstration of the algorithm's phases for 4 UAVs in $13\times11$ search area (Case no.\hspace{0.2em}2 in Table \ref{tab: mid case result}.)}
    \label{fig:case2midsize}
\end{figure}
\begin{figure}[H]
    \centering
    \includegraphics[scale=0.65]{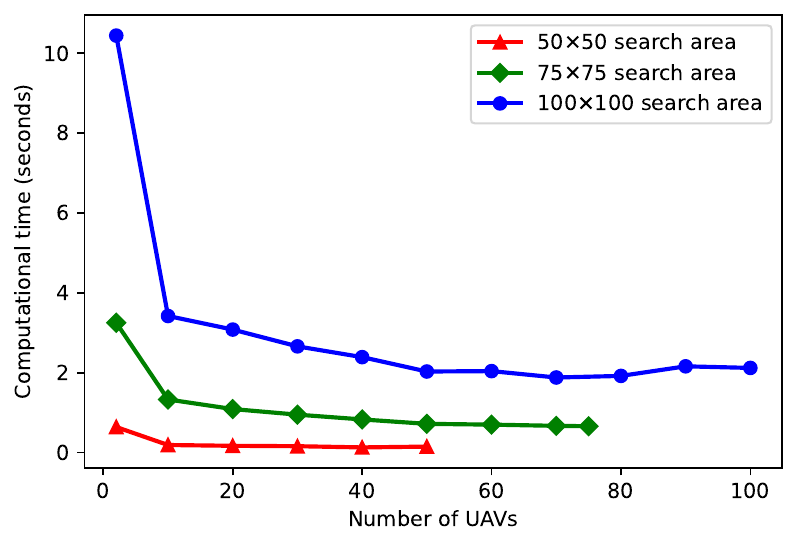}
    \caption{Impact of the number of UAVs on the computational time.}
    \label{fig:Q-sensivity}
\end{figure}

\subsection{Large-sized cases and sensitivity analysis}
In this section, the results of the algorithm's performance on large-sized cases, which range from 1000 to 10,000 cells, are presented in Table \ref{tab: large case result}. It is important to note that the results in this section carry greater significance as they reflect the algorithm's efficiency in handling cases that closely resemble real-world scenarios. According to the findings, the algorithm demonstrates a very good performance in large-sized cases, and generates solutions achieving an optimality gap of $0.00\%$, all within a reasonable timeframe. Also, as expected, all solutions belong to the set $\{\text{LB},\text{LB}+T_p\}$.

In the following step, we examine the impact of varying the number of UAVs on computational time, as depicted in Fig. \ref{fig:Q-sensivity}. This analysis shows some aspects of the algorithm's behavior. Firstly, the maximum computational time is observed when there are only two UAVs. Secondly, while the addition of UAVs does increase problem complexity, it initially accelerates algorithm performance. Thirdly, after a certain point, altering the number of UAVs exhibits no substantial influence on computational time.

Figure \ref{fig:min-UAV&dim}b illustrates how varying dimensions within a rectangular search area affect computational time. Seven cases with different dimensions are presented, each containing 10,000 cells. The results demonstrate that the algorithm requires less computational time in search areas with smaller length and greater width. Furthermore, it is reaffirmed that across all cases, employing two UAVs consistently demands more computational time. Considering the battery limitation of the UAVs, each operation time has an upper bound. Given one hour as the upper bound of the operation time, Figure \ref{fig:min-UAV&dim}a shows the minimum number of UAVs required to cover the cases in Table \ref{tab: large case result} in less than one hour. In this regard, it is noteworthy to mention that the LB formula can provide a very good estimation for calculating the minimum number of UAVs required for an operation with a determined time.
\begin{table}[h]
\caption{Results for large-sized test cases} \label{tab: large case result} 
\begin{tabular*}{\textwidth}{@{\extracolsep\fill}lccccccc}
\toprule%
Case no.&Size of the&Number&LB& \multicolumn{2}{@{}c@{}}{NOPP} &Optimality & Absolute \\\cmidrule{5-6}%
 & $n\times m$ search area & of UAVs & & $Z_{\mathit{NOPP}}$ & Time(s) & gap($\%$)& gap(s)\\
\midrule
1&$50\times 50$&2&6388.00&6388.00&0.81&0.00&0.00\\
2&$50\times 75$&2&9613.00&9613.00&1.71&0.00&0.00\\
3&$75\times 50$&2&9584.00&9584.00&2.35&0.00&0.00\\
4&$75\times 75$&2&14424.08&14429.24&2.92&$\approx 0.00$&5.16\\
5&$75\times 100$&2&19259.00&19264.16&5.37&$\approx 0.00$&5.16\\
6&$100\times 75$&2&19230.00&19230.00&6.34&0.00&0.00\\
7&$100\times 100$&2&25680.0&25680.0&10.44&0.00&0.00\\
\botrule
\end{tabular*}
\footnotetext{LB is obtained from LB formula in Sect. \ref{sec:LBproof}. Optimality gap = $\frac{|LB-Z_{\mathit{NOPP}}|}{LB}\times 100$, and absolute gap = $|LB-Z_{\mathit{NOPP}}|$.}
\end{table}

\begin{figure}[h]
    \centering
    \subfigure(a)
  {\includegraphics[width=0.45\textwidth]
    {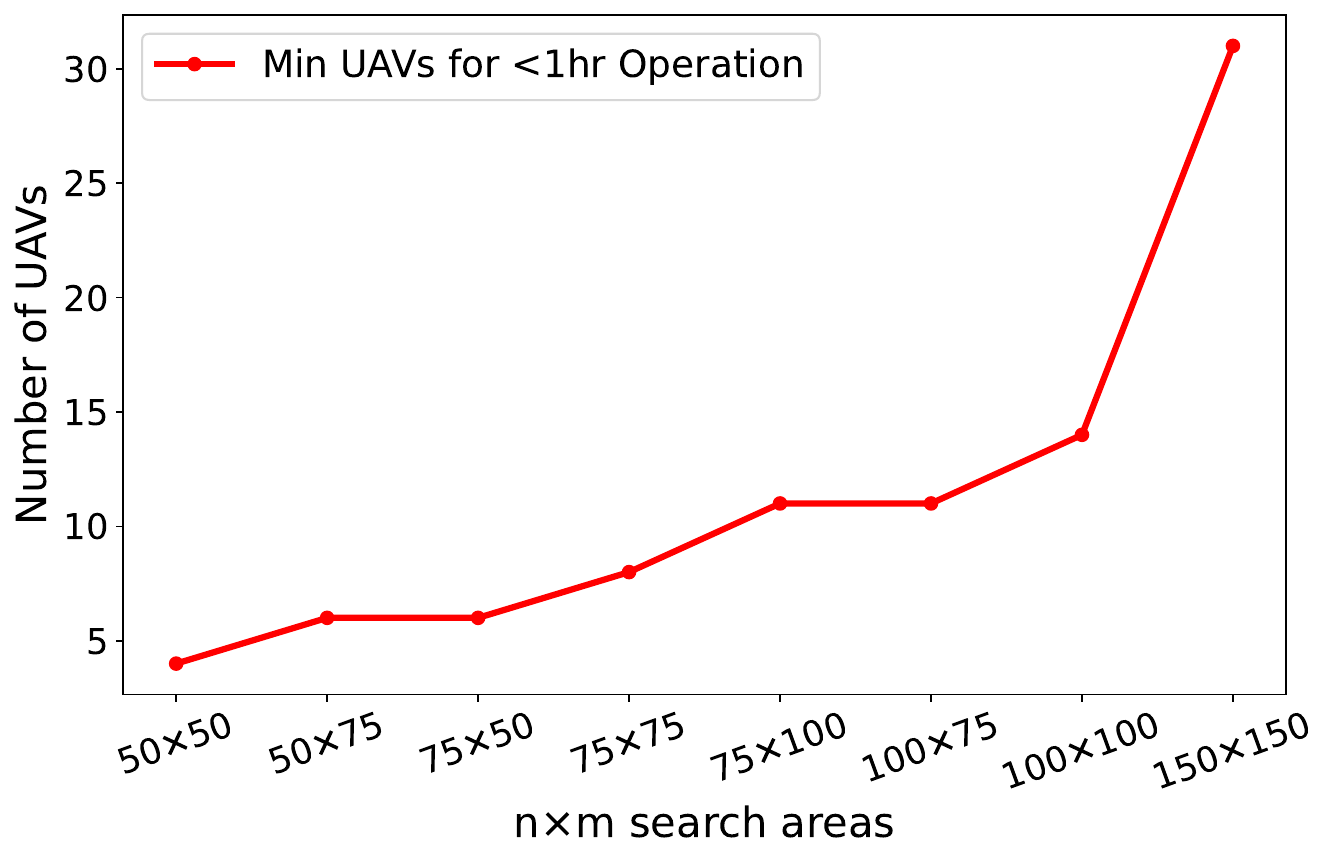}} 
    \hspace{0.2 cm}
    \subfigure(b)
{\includegraphics[width=0.45\textwidth]{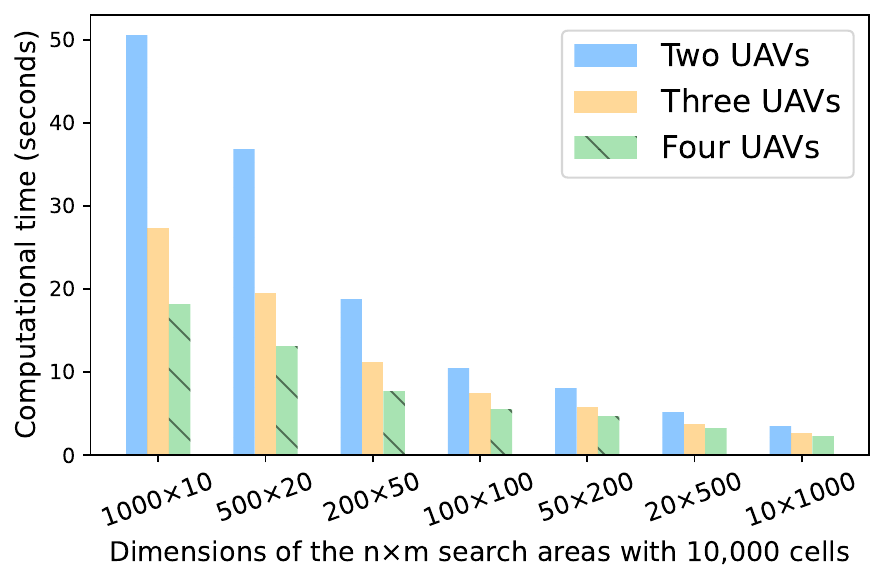}} 
    \caption{(a) Minimum UAV number for operations under one hour; (b) Impact of different search area dimensions on computational time.}
    \label{fig:min-UAV&dim}
\end{figure}

\section{Conclusion and Future Work} \label{sec:conclusion}
Using UAVs for search and rescue comes with its own challenges such as the risk of collision and environmental factors. To deal with these challenges, we developed a path-planning algorithm to enable a fleet of UAVs to efficiently search a target area in windy conditions. We first proposed a mixed-integer programming model to formulate the problem. Then, to design a practical solution approach, we investigated a special case of the problem where the search area is a rectangular grid. In this context, we presented a mathematically validated formula for calculating the problem's lower bound. Next, we proposed the NOPP algorithm that consistently yields feasible solutions, achieving either the lower bound or approaching it closely.

We conducted a series of experiments encompassing scenarios of varying search area sizes: small, medium, and large cases. The outcomes of these experiments consistently reveal the algorithm's remarkable proficiency in promptly generating feasible solutions, even when confronted with a large case containing up to 10,000 cells. Also, as indicated by the results, augmenting the problem's complexity by adding more UAVs not only fails to yield a significant impact on the algorithm's speed but, in some cases, can actually lead to an increase in its speed. Furthermore, the results demonstrate that modifying the dimensions of a search area while maintaining a constant cell count significantly influences the computational time. In summation, our findings validate the efficacy of the proposed algorithm as a dependable tool for search and rescue operations. Its ability to efficiently devise feasible, optimal, or near-optimal solutions within a reasonable timeframe, particularly in the context of large-scale scenarios, underscores its potential value to search and rescue teams.

Several aspects of the present work can be extended. First, the mathematical framework devised for formulating the lower-bound calculation of the problem can be applied in scenarios involving heterogeneous UAV fleets with unequal speeds. Second,  polygonal shapes can be considered as an alternative to rectangular shapes. Our primary focus in this research has been on search and rescue, leading us to concentrate solely on operation time while neglecting other aspects. Future studies can broaden their scope to include factors like battery consumption and landing location, making them more applicable to other applications.

\appendix
\section*{Appendix A: Complete proof of feasibility and near-optimality for the NOPP algorithm}
We start by introducing a foundational framework that forms the basis for the subsequent propositions of this section. We define $C_r^{k}$ as a class of grid-based shapes, all of which share the following three properties:
\begin{enumerate}
  \item[1)]
  All shapes possess $r$ columns, and at least one of the columns has $k$ cells (the distance between the baseline and the top line is k cells, as shown in Fig. \ref{fig:G-shape}a).
  
  \item[2)] 
  Within each column, cells are required to initiate from the top line and remain connected, no disjoint.

  \item[3)] 
  Every column contains a minimum of one and a maximum of $k$ cells.

\end{enumerate}
\begin{figure}[H]
    \centering
    \subfigure(a){\includegraphics[width=0.45\textwidth]{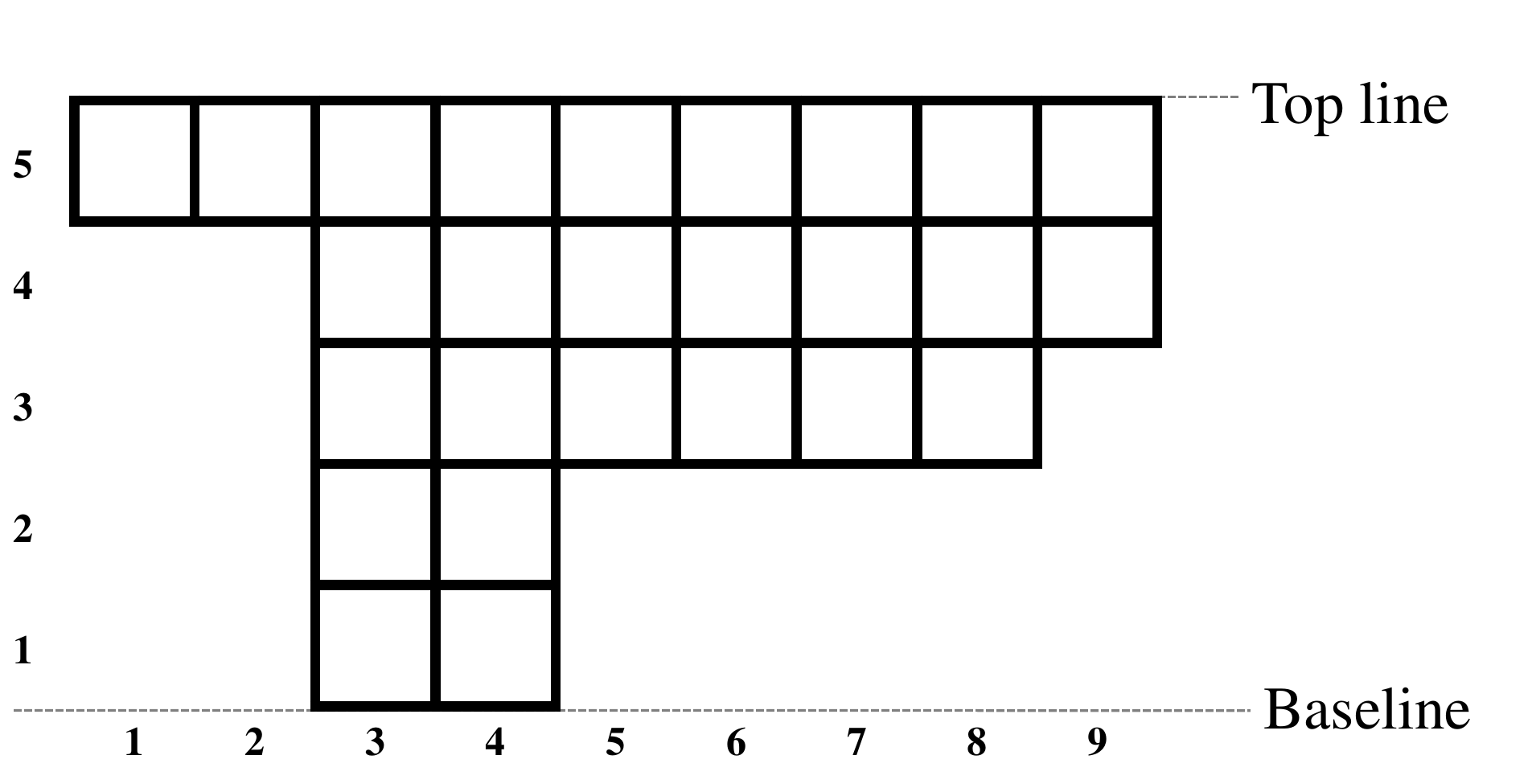}} 
    \subfigure(b){\includegraphics[width=0.45\textwidth]{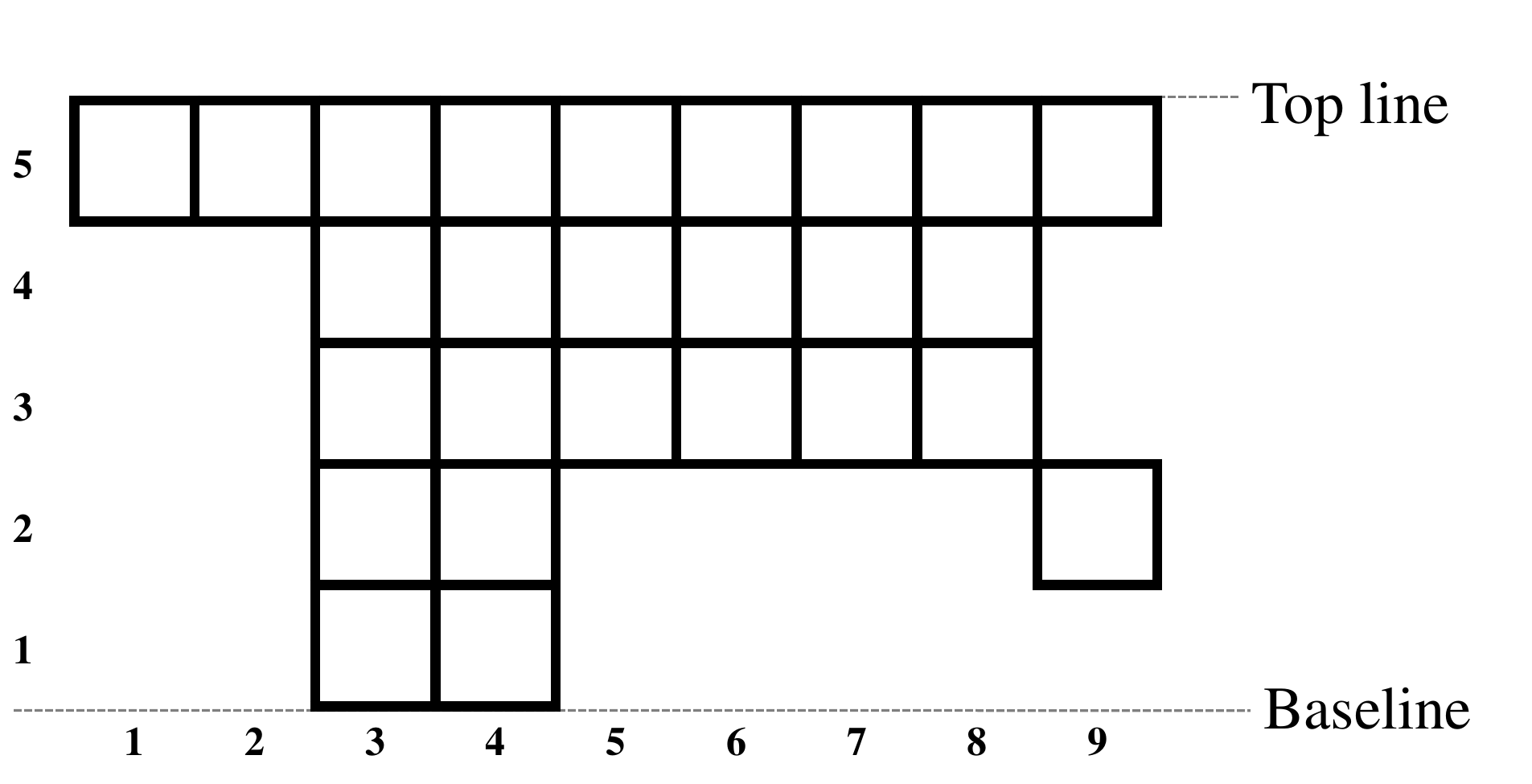}} 
    \subfigure(c){\includegraphics[width=0.45\textwidth]{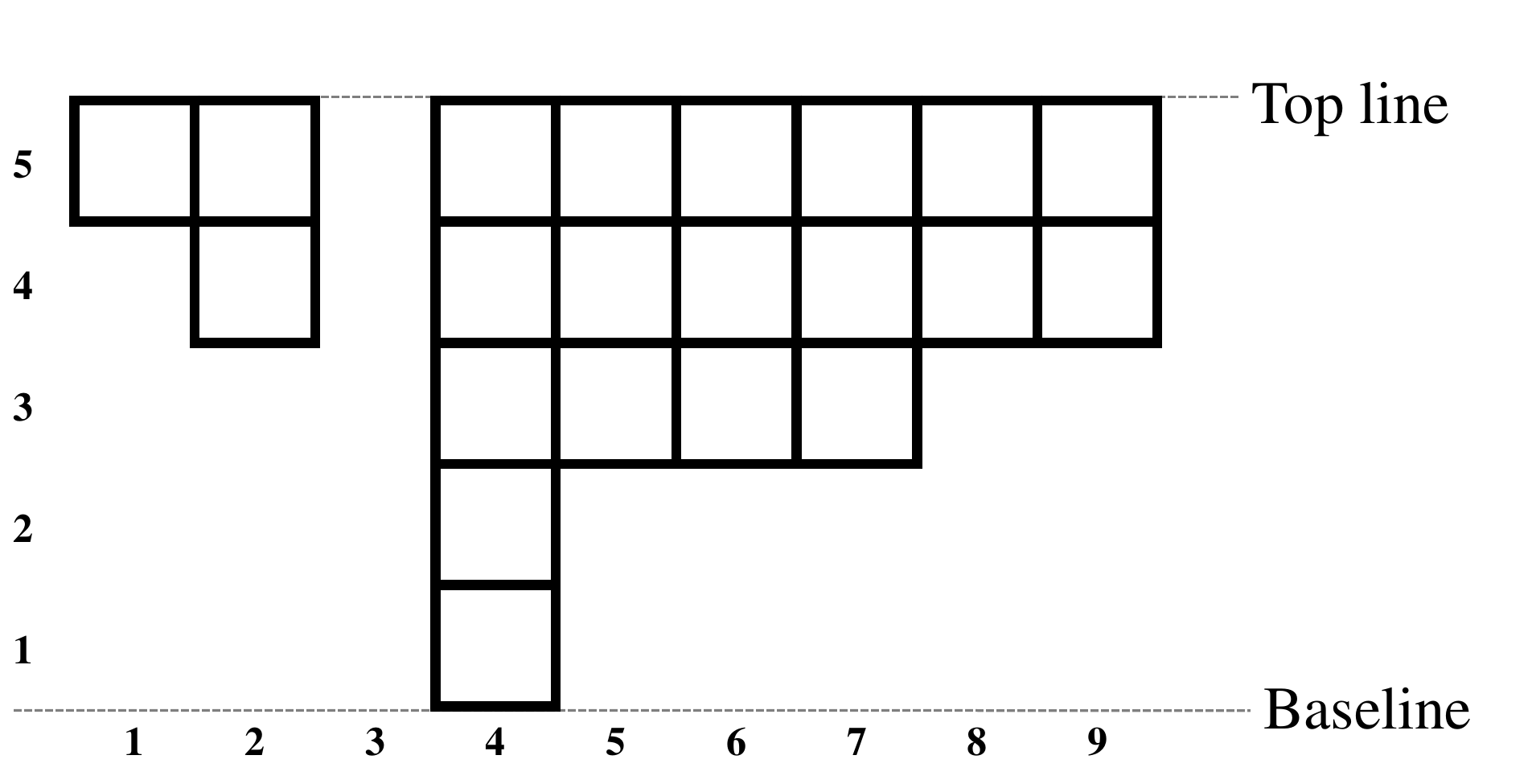}}
    \subfigure(d){\includegraphics[width=0.45\textwidth]{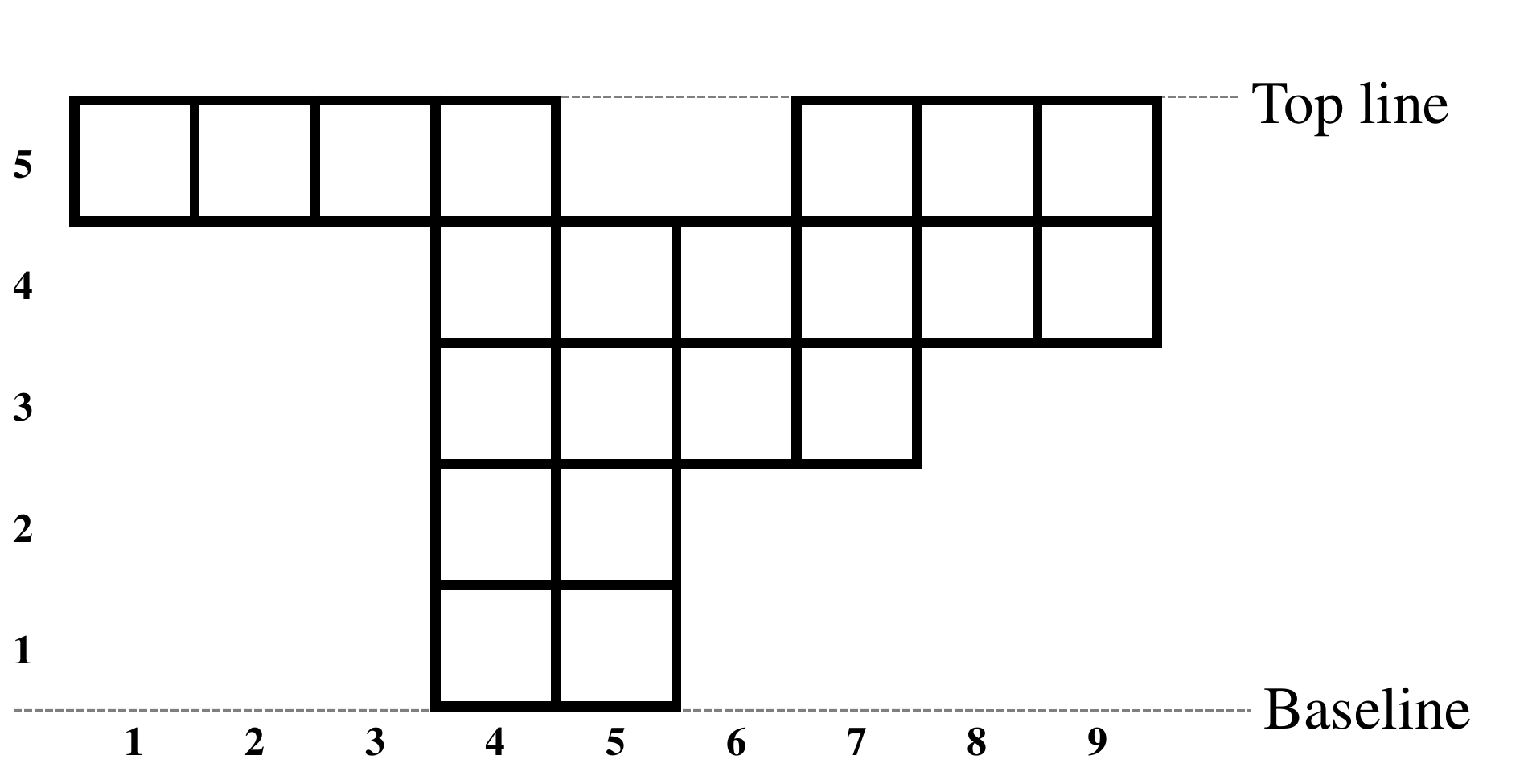}}
    \caption{Valid and invalid examples of $C_9^5$ shapes: (a) valid; (b) invalid because there is a disjoint in column 9; (c) invalid because column 3 does not have even one cell; (d) invalid because cells in columns 5 and 6 do not initiate from the top line.}
    \label{fig:G-shape}
\end{figure}
In addition,
the notations utilized in this section are presented in Table \ref{tab:algorithm proof notation} for reference.

\begin{table}[h]
\caption{Algorithm's feasibility proof notations} \label{tab:algorithm proof notation} 
\begin{tabular*}{\textwidth}{@{}ll@{}}
\toprule
Notations & \\
\midrule
$\Bar{I}$ & Set of indices pertaining to the columns of the $C_r^k$ shape, $\{1,2,3,...,r\}$\\
$p^i$ &
Set of cells in column $i$, $\{(i,k),(i,k-1),\dots\}$ for $i \in \bar{I}$, is arranged sequentially \\
& and referred to as ``pit $i$"\\
$v_t$ & Subset of $\bar{I}$ refered to as valley of length $t$, $\{i_1$,\dots,$i_t\}$ for $i_k\in \bar{I} \ (k=1,\dots,t)$, where:\\

&\ \ \  $\bullet$ Indices are sequentially ascending \\ & 
 \ \ \ $\bullet$ If $t>1$, then $\forall i,j \in v_t and\  i \neq j, |p^i|=|p^j|$\\
 &\ \ \ $\bullet$ If  $i_1\neq 1$,then $|p^{i_1-1}|\neq |p^{i_1}|$ \\
 &\ \ \ $\bullet$ If $i_t\neq n$, then $|p^{i_t}|\neq |p^{i_t+1}|$ \\
$d(v_t)$ & Depth of an t-length valley calculated as $\forall i \in v_t, d(v_t)=|p^i|$\\
$G$ & A partition of $\bar{I}$ such that:\\
&\ \ \ $\bullet$ All its members are valleys\\
& \ \ \ $\bullet$ Elements of each valley are greater than any element of the preceding valley\\
\botrule
\end{tabular*}

\end{table}
Continuing further, according to the definitions in Table \ref{tab:algorithm proof notation}, we introduce essential terminology employed within the propositions of this section. To begin, for a $C_r^k$ shape, we establish the term \emph{first valley} to denote the valley (as defined in Table \ref{tab:algorithm proof notation} as $v_t$) containing element 1 as well as the term \emph{final valley} to refer to the valley encompassing the element $r$, and also, the term \emph{middle valley}  is designated for other valleys. It should be mentioned that if a valley has both 1 and $r$, it is considered as the first valley.  Furthermore, we introduce the concept of \emph{feasibility feature (FF)}, wherein all valleys in a $C_r^k$ shape are constrained to possess even cardinalities (the cardinality of a valley is equal to its length) except the first and final valleys. The notation ``FF-G" is adopted to signify the set $G$ (as defined in Table \ref{tab:algorithm proof notation}) of a $C_r^k$ shape endowed with the feasibility feature. Moreover, if a $C_r^k$ shape is representable in the form of an FF-G set, it is designated as an \emph{FF-G shape}. For a clearer understanding of defined notations and definitions, consider Figure \ref{fig:G-shape}a as an example, depicting an FF-G shape with its corresponding $G$ set equal to $\{\{1,2\},\{3,4\},\{5,6,7,8\},\{9\}\}$. The first valley is $v_1=\{1,2\}$ and to find its depth, we need to check the pits (as defined in Table \ref{tab:algorithm proof notation} as $p^i$) with indices 1 and 2. For the pit with index 1, we have $p^1=\{(1,5)\}$, and for the pit with index 2, $p^2=\{(2,5)\}$. So, the depth of the first valley is $d(v_1)=|p^1|=|p^2|=1$. The middle valleys are $v_2=\{3,4\}$ and $v_3=\{5,6,7,8\}$. For $v_2$, we have to consider pits with index 3 and 4, So $p^3=\{(3,5),(3,4),(3,3),(3,2),(3,1)\}$, and $p^4=\{(4,5),(4,4),(4,3),(4,2)(4,1)\}$; then we have, $d(v_2)=|p^3|=|p^4|=5$. For $v_3$, pits with indices $5,6,7,8$ should be considered. So, we have $p^5=\{(5,5),(5,4),(5,3)\},\ p^6=\{(6,5),(6,4),(6,3)\},\ p^7=\{(7,5),(7,4),(7,3)\},\ \text{and} \ p^8=\{(8,5),(8,4),(8,3)\}$ as well as $d(v_3)=|p^5|=|p^6|=|p^7|=|p^8|=3$. The finall valley is $v_4=\{9\}$ in which $p^9=\{(9,5),(9,4)\}$ and $d(v_4)=|p^9|=2$. It should be added that the length of valleys $v_1$ (the first valley), $v_2$, $v_3$, and $v_4$ (the final valley) are $2,2,4$, and $1$ respectively. Since the length of the middle valleys ($v_2$ and $v_3$) is even, we have an FF-G shape. Now, the following propositions can be stated.

\begin{prop}\label{prop:FF-G}
There is a feasible path to cover all cells within an FF-G shape.
\end{prop}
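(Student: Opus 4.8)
The plan is to exploit the valley decomposition $G$ directly. Because every valley is, by definition, a maximal run of consecutive columns of equal depth $d(v_t)$, each valley is a solid rectangular block of cells, and the entire $C_r^k$ shape is a left-to-right concatenation of such rectangles, all hanging from the common top line, which is row $k$ (every column of a $C_r^k$ shape contains its cell in row $k$). I would therefore construct a single covering path by visiting the valleys in increasing order of their column indices, sweeping each valley with a boustrophedon (snake) traversal, and stitching consecutive valleys together with one horizontal move along row $k$.

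The engine of the argument is a parity fact about the snake sweep of a $w \times h$ rectangle. Entering at the top cell of the leftmost column and sweeping column-by-column (down, right, up, right, down, $\dots$), the sweep terminates at the top cell of the rightmost column when $w$ is even and at the bottom cell of the rightmost column when $w$ is odd; starting instead at the bottom cell of the leftmost column reverses these outcomes. Hence an even-width rectangle admits a Hamiltonian sweep whose two endpoints are its top-left and top-right cells, both lying on row $k$. This is exactly where the feasibility feature is used: every \emph{middle} valley has even length (equivalently, even width), so each middle valley can be entered at its top-left cell and exited at its top-right cell, leaving the path positioned to step one column to the right into the next valley, still on row $k$.

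The first and final valleys, whose widths may be odd, are handled by using the freedom at the two ends of the global path. For the first valley I would not fix the entry point: whatever its width, I can select the starting corner (top-left for even width, bottom-left for odd width, and the trivial single-row case handled by a straight horizontal sweep) so that the sweep ends at the top-right cell on row $k$, ready to hand off to the second valley. Symmetrically, the final valley is entered at its top-left cell on row $k$ and swept to completion, its terminal cell being immaterial since the global path ends there and its endpoint need not be prescribed. Chaining these pieces produces a single path that visits every cell of the shape exactly once using only grid-adjacent (Von Neumann) moves, which establishes the claimed feasibility.

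The main obstacle, and the crux of the proof, is the parity bookkeeping: one must verify that the even-length requirement on middle valleys is precisely what guarantees a top-left-to-top-right snake, and that odd-width valleys can be tolerated only at the two ends, where the free global start and end endpoints absorb the parity mismatch. A secondary point requiring care is confirming that every handoff between consecutive valleys is a legal move; this reduces to the observation that the exit cell of one valley and the entry cell of the next are horizontally adjacent and both occupy row $k$, which holds because in a $C_r^k$ shape every column reaches the top line.
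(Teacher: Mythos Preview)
Your proposal is correct and essentially mirrors the paper's argument: the paper also sweeps each valley as a rectangle by pairing adjacent pits (down one column, $S$ move, up the next), stitches consecutive valleys with an $S$ move along the top row, and handles an odd-length first or final valley by peeling off the boundary column and covering it separately, which is exactly your device of choosing the snake's free endpoint at the global start and end. The only difference is packaging: you describe the construction via a single boustrophedon-parity lemma, whereas the paper spells out the four parity combinations for the first and final valleys as separate cases.
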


\begin{proof}
To prove the proposition, it is necessary to consider the following cases for the valleys of set $G$:
\begin{enumerate}
  \item[I)]
  All valleys are even-length:\\ 
  Given that the length of each valley is an even number, they can be divided into one or more pairs of pit indices. For each pair of pit indices, consider the pair of pits corresponding to it. To cover each pair of pits, we can start from the first cell of the first pit (pit with the lower index) and use $D$ moves to reach the end cell. Following this, with a single $S$ move, we can transit to the final cell of the next pit, and subsequently cover all cells in this pit using $U$ moves. Finally, by employing $S$ moves, we can establish connectivity across all pairs.
  \item[II)] 
  All valleys are even-length except for the first valley:\\
  Initially, omit 1 (index of the first column) from the first valley. As a result, all valleys become even-length, thereby allowing us to use the pattern of case one. Finally, we can cover the cells in $p^1$ by starting from the last cell and using $U$ moves, and also it can be connected to the next pit by an $S$ move.
   \item[III)]
   All valleys are even-length except for the final valley:\\Initially, disregard $r$ (index of the last column) from the last valley. Consequently, all valleys become even-length, enabling us to apply the pattern of case one. At the end, cells in $p^r$ can be covered by starting from the first cell and employing $D$ moves. It must be mentioned that an $S$ move makes a connection between $p^{r-1}$ and $p^r$.
   \item[IV)]
   All valleys are even-length except for the first and final valleys:\\ This case can be effectively resolved by combining cases two and three.
\end{enumerate}
In addition, it is crucial to highlight that when the depth of a valley equals one, the associated pits only need $S$ moves to be covered and connected to other pits. Therefore, it is consistently achievable to create a feasible path for covering an FF-G shape.
\end{proof}

Before proceeding, we need to define two types of FF-G shapes that will be used in propositions \ref{prop:feasibility2} and \ref{prop:feasibility3}. These types are:
\begin{enumerate}
    \item [$\bullet$] FF-G1: This refers to FF-G shapes with just one valley. For example, a rectangle is an FF-G1 shape.
    \item [$\bullet$] FF-G2: This refers to FF-G shapes where the first valley has a depth of one, and the depth of middle valleys is greater than one. For instance, Figure \ref{fig:G-shape}a shows an FF-G2 shape.
\end{enumerate}
Also, since the feasibility and optimality of the proposed algorithm are obvious for $q=m$, for the next propositions, we suppose that $q<m$ and $q\le n$.\\
   
\begin{prop} \label{prop:feasibility2}
Upon completion of Phases 1, 2 and 3 of the NOPP algorithm which result in a path for the $i^{\text{th}}$ UAV ($i=1,2,\dots,q-1$), available cells for the $(i+1)^{\text{th}}$ UAV form an FF-G2 shape.  
\end{prop}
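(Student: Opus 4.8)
The plan is to prove the statement by induction on the UAV index $i$, maintaining the invariant that the cells available to UAV $i$ (the uncovered cells with $Y$-coordinate at most $m-q+i$) always form an FF-G2 shape whose top line sits at row $m-q+i$, with the convention that UAV $1$'s region is the $n\times(m-q+1)$ rectangle (an FF-G1 shape), treated as the base case. Since the starting point of UAV $i+1$ is $(1,m-q+i+1)$, exactly one fresh row, row $m-q+i+1$, becomes accessible and is entirely uncovered; the region handed to UAV $i+1$ is therefore this new top row together with whatever UAV $i$ leaves behind in the rows below. The whole argument reduces to characterizing that leftover set and checking it against the defining properties of a $C_n^k$ shape, the feasibility feature, and the two FF-G2 conditions.

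First I would analyze Phase 1 to pin down the first valley. Because the move priority is $D$, then $S$, then $U$, UAV $i$ begins at the top-left cell $(1,m-q+i)$ of its region and drives column $1$ downward as far as cells remain. When the input is the rectangle, this is a full descent that clears column $1$ down to the baseline; when the input is an FF-G2 shape, column $1$ already has depth one by the induction hypothesis, so UAV $i$ covers that single cell immediately at the start. In both cases column $1$ is fully cleared below the new top line, so in the leftover only the freshly exposed cell $(1,m-q+i+1)$ survives in column $1$. This yields a first valley of depth exactly $1$, which is the defining feature of FF-G2.

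Next I would track the parity bookkeeping of Phases 2 and 3 to establish the even-length middle valleys of depth greater than one. The key observation is that each Phase 3 insertion replaces a BI pattern by a $U$-$S$-$D$ bump that covers two cells in two \emph{adjacent} columns, one level above the current row; consequently the columns whose depth is shaved by a bump always occur in consecutive pairs, which is precisely what forces each middle valley to have even length. I would then argue that the parity flag $\bar{O}$ (the parity of the initial $\mathit{NP}$) together with the Phase 2 removal of the trailing $U$-run are exactly the mechanisms that reconcile the remaining parity, so that after the budget of $\mathit{NP}$ perpendicular moves is spent the interior columns split into even-length runs of equal depth, while any odd remainder is pushed toward column $n$, i.e. into the final valley (which the feasibility feature exempts, and which Phase 4 later repairs for the last UAV). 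Since the bumps only remove cells from the bottom rows while the interior height of the region exceeds one, the middle valleys retain depth greater than one.

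Finally I would assemble the $C_n^k$ and FF properties: the shape has $r=n$ columns with top line $Y=m-q+i+1$; within every column the covered cells always form a bottom-anchored contiguous block (the descent and the bumps only ever fill from the base upward), so the uncovered cells form a top-anchored contiguous block with no gaps, giving properties (2) and (3), and each column retains between one and $k$ cells; the first and final valleys are exempt from the even-length constraint while every middle valley is even by the pairing argument, establishing the feasibility feature; and the first-valley-depth-one and middle-valley-depth-greater-than-one conditions hold by the two preceding paragraphs, so the leftover is FF-G2. The main obstacle is the exhaustive case analysis underlying the pairing argument: one must verify, across all combinations of the parity of $\mathit{NP}$, of whether the Phase 1 path terminates in a $U$-run (hence whether Phase 2 fires), and of whether the input region is the rectangle or an already-stepped FF-G2 shape, that the bumps tile the interior columns in adjacent pairs, that no column is ever left with an interior gap, and that the depth-one first valley and the final valley are the only places where an odd-length run may appear.
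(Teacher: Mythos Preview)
Your inductive framework and your treatment of the first valley match the paper's proof. The gap is in where you locate the source of the even-length middle-valley property. You attribute it to Phase~3, arguing that each $U$--$S$--$D$ bump shaves two adjacent columns and hence valleys come out even. But Phase~3 acts only after Phase~1 has already carved a staircase along the contour of the input FF-G2 shape, and it is that staircase---not the bumps---that fixes the valley structure of the leftover. The paper therefore inverts your attribution: it shows, by an explicit case analysis on the depth relations $d(g_{i-1})$ versus $d(g_i)$ versus $d(g_{i+1})$ for each middle valley $g_i$, that Phases~1 and~2 alone already leave every middle valley with even length (the four cases yield $|g_i'|\in\{|g_i|-2,\,|g_i|,\,|g_i|+2\}$), and then observes that Phase~3, operating on pairs of pits, merely \emph{preserves} that parity. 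Phase~2's role is narrower than you suggest: it guarantees evenness only of the new valley created when the final valley splits (the $g_\beta$ or $g_a$ piece in the paper's notation), not of middle valleys in general.

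You concede that the ``main obstacle is the exhaustive case analysis underlying the pairing argument,'' but that analysis is not a detail to be deferred---it \emph{is} the proof, and the paper carries it out explicitly (four cases for middle valleys, two with sub-cases for the final valley, plus the first-valley case). Without it, your claim that the interior columns split into even-length runs of equal depth is asserted rather than argued. A minor point: the phrase ``the descent and the bumps only ever fill from the base upward'' is inaccurate for the leading columns, which Phase~1 sweeps top-down; the reason the leftover stays top-anchored there is that those columns are cleared completely, not that they are filled from below.
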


\begin{proof}
    In the beginning, let us state that the available cells for the $i^{\text{th}}$ UAV ($i=1,2,\dots,q$) are those uncovered cells that satisfy the Y-coordinate condition (as
explained in Sect. \ref{sec:phase1}). In this proof, we employ $g_i$ (for $i = 1, ..., |G|$) to denote the $i^{th}$ valley (member) of the set G, and we also use $g_{i}^{\prime}$ to represent the $i^{th}$ valley after completing Phases 1 and 2. Additionally, we assume that the depth of a valley can be temporarily zero. 

    First, we consider the FF-G1 shape that is related to the first UAV. Based on the algorithm design, the first UAV starts from the cell $(1,m-q+1)$. Therefore, we can say that the available cells for the first UAV make a $C_n^{m-q+1}$ shape. Since this shape has only one valley ($G=\{\{1,2,\dots,n\}\}$), it is an FF-G1 shape.

    After Phases 1 and 2, the two following cases can occur for the single valley of the shape ($g_1$):
    \begin{enumerate}
        \item [I)] It is broken down into two valleys  ($g_\alpha$ and $g_\beta$) that are:
        \begin{enumerate}
            \item [$\bullet$] $g_\alpha=\{1\}$, and $d(g_\alpha)=0$
            \item[$\bullet$] $g_\beta=\{2,\dots,n\}$, and 
        $d(g_\beta)=d(g_1)-1$
        \end{enumerate}
        \item [II)] It is broken down into three valleys ($g_\alpha$, $g_\beta$, and $g_\gamma$) that are:
        \begin{enumerate}
              \item [$\bullet$] $g_\alpha=\{1\}$, and $d(g_\alpha)=0$
            \item[$\bullet$] $g_\beta=\{2,\dots,n-1\}$, and $d(g_\beta)=d(g_1)-1$
             \item[$\bullet$] $g_\gamma=\{n\}$, and
         $d(g_\gamma)=d(g_1)-k$ (for $k=2,\dots,d(g_1)$)
        \end{enumerate}
    \end{enumerate}
    Phase 2 guarantees that $|g_\beta|$ is even in the second case. Also, since Phase 3 considers a pair of pits in each valley from the lowest index, the length of the middle valleys is even after implementing this phase. We know that, for the next UAV (the second UAV), the starting point is $(1,m-q+2)$, so the top line for this UAV moves up one cell above the top line of the previous UAV (the first UAV). Therefore, we can add one unit to the depth of the valleys and finally, we have an FF-G2 shape for the second UAV.

    Now we have an FF-G2 shape. Initially, disregarding the first and final valleys, the following cases demonstrate how the middle valleys are modified when Phases 1 and 2 are applied. Therefore, these cases for $g_i$ (for $i \in \{2,\dots,|G|-1\}$) are:
    \begin{enumerate}
        \item[I)] If $d(g_{i-1})<d(g_i)<d(g_{i+1})$, then $|g_{i}^{\prime}|=|g_i|$ and $d(g_{i}^{\prime})=d(g_i)-1$
        
        \item[II)] If $d(g_{i-1}) < d(g_i)$ and $d(g_{i}) > d(g_{i+1})$, then $|g_{i}^{\prime}|=|g_i|-2$ and $d(g_{i}^{\prime})=d(g_i)-1$
        \item[III)] If $d(g_{i-1})>d(g_i)>d(g_{i+1})$, then $|g_{i}^{\prime}|=|g_i|$ and $d(g_{i}^{\prime})=d(g_i)-1$
        \item[IV)] If $d(g_{i-1}) > d(g_i)$ and $d(g_{i}) < d(g_{i+1})$, then $|g_{i}^{\prime}|=|g_i|+2$ and $d(g_{i}^{\prime})=d(g_i)-1$
    \end{enumerate}
    For the next step, we consider the first valley. Since the first valley is a one-depth valley, there is only one case:
    \begin{enumerate}
        \item[I)] $|g_{1}^{\prime}|=|g_{1}|+1$ and $d(g_{1}^{\prime})=d(g_1)-1=0$
    \end{enumerate}
    In the following, the final valley, $g_{|G|}$, should be evaluated in some cases. After implementing Phases 1 and 2, there are two cases for the final valley:
    \begin{enumerate}
        \item [I)] It is broken into separate valleys ($g_a$ and $g_b$). So, we have the following sub-cases:
        \begin{enumerate}
            \item [1)] if d$(g_{|G|}) < $d$(g_{|G|-1})$ then we have:
            \begin{enumerate}
                \item [$\bullet$] $|g_a|=|g_{|G|}|$, d($g_a$)=d($g_{|G|}$)-1
                \item [$\bullet$] $|g_b|=1$,
              $g_b=\{n\}$, $d(g_b$)=$d(g_{|G|})-k$, for $k=2,3,\dots,d(g_{|G|})$
            \end{enumerate}
             \item [2)] if d$(g_{|G|}) > $d$(g_{|G|-1})$ then we have:
             \begin{enumerate}
                 \item [$\bullet$]
                 $|g_a|=|g_{|G|}|-2$, d($g_a$)=d($g_{|G|}$)-1
                 \item[$\bullet$] 
                 $|g_b|=1$, $g_b=\{n\}$, $d(g_b$)=$d(g_{|G|})-k$, for $k=2,3,\dots,d(g_{|G|})$
             \end{enumerate}
        \end{enumerate}
        Phase 2 guarantees that $|g_a|$ is always even.
        \item [II)] It is not broken into separate valleys, then we have two sub-cases for $g_{|G|}^\prime$:
        \begin{enumerate}
      
            \item [1)] if d$(g_{|G|}) < $d$(g_{|G|-1})$then we have:
            \begin{enumerate}
                \item [$\bullet$]
                $|g_{|G|}^\prime|=|g_{|G|}|+1$, d($g_{|G|}^\prime$)=d($g_{|G|}$)-1
            \end{enumerate}
             \item [2)] if d$(g_{|G|}) > $d$(g_{|G|-1})$then we have:
             \begin{enumerate}
                 \item [$\bullet$]
                 $|g_{|G|}^\prime|=|g_{|G|}|-1$, d($g_{|G|}^\prime$)=d($g_{|G|}$)-1
             \end{enumerate}
           
        \end{enumerate}
    \end{enumerate}
After these changes, it is evident that the middle valleys have even lengths following the execution of Phases 1 and 2. As Phase 3 of the algorithm operates on pairs of pits, the parity of the length of middle valleys remains unchanged. For the next UAV (the third UAV), we shift the top line up by one cell, so all the depth of valleys increases by one unit. Consequently, the depth of the first valley is equal to one, and the remaining shape for the third UAV is an FF-G2 shape. Finally, we can see that this pattern will be continued for the next UAVs, and therefore, available cells for $i^{th}$ UAV ($i=2,\dots,q$) make an FF-G2 shape.
    
\end{proof}
Now, we can prove the feasibility of the generated solutions by the proposed algorithm.
\begin{prop}\label{prop:feasibility3}
    The NOPP algorithm generates feasible solutions.
\end{prop}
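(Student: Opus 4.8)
The plan is to verify that the solution produced by NOPP satisfies the three defining constraints of the MUCPP problem: path connectivity (\ref{eq:c3-MUC}), pairwise disjointness of the covered cell sets (\ref{eq:c1-MUC}), and complete coverage of $\bar{C}$ (\ref{eq:c2-MUC}). The first two follow by a direct structural argument over the four phases, while the main effort is reserved for the coverage requirement, where Propositions \ref{prop:FF-G} and \ref{prop:feasibility2} do the heavy lifting.

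First I would argue connectivity and disjointness together. For connectivity, each phase modifies $path$ only by operations that preserve the neighbor relation: Phase 1 appends $D$, $S$, or $U$ moves, each targeting a cell in $N_C$; Phase 2 deletes a terminal block of $U$ moves, leaving a contiguous prefix of a connected path; Phase 3 replaces a move by a local detour, turning (``S'',``S'') into (``S'',``U'',``S'',``D'') and (``U'',``S'') into (``U'',``U'',``S'',``D''), whose inserted $U$ and $D$ moves reconnect to the original successor cell; and Phase 4 appends $U$ moves. Hence every UAV path satisfies (\ref{eq:c3-MUC}). For disjointness, the algorithm threads a single global set $F$ of uncovered cells through all phases and all UAVs, and every move is guarded by the test that the target cell lies in $F$, after which the cell is removed via $F \gets F \setminus \{C\}$; thus no cell can be appended to two paths, which gives (\ref{eq:c1-MUC}).

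The crux is complete coverage. I would invoke Proposition \ref{prop:feasibility2} as the governing invariant: the cells still available to UAV $i+1$ always form an FF-G2 shape, while the first UAV instead faces a rectangle, an FF-G1 shape. Since FF-G1 and FF-G2 shapes are in particular FF-G shapes, Proposition \ref{prop:FF-G} guarantees that a feasible covering path always exists for such a region, and the down-across-up pattern exhibited in its proof is precisely what Phases 1--3 realize. For the intermediate UAVs $i = 1, \ldots, q-1$ some cells with X-coordinate $n$ may be deferred, but Proposition \ref{prop:feasibility2} certifies that the remainder is again a coverable FF-G2 shape, so no cell is ever stranded in an uncoverable configuration. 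For the final UAV $i = q$, after Phases 1--3 the only possibly uncovered cells form the upper part of the column at X-coordinate $n$; Phase 4 then appends successive $U$ moves from the terminal cell of the path and, since its loop runs $|F|$ times over a finite set, empties $F$ entirely, yielding $\cup_{i=1}^{q} p_{d_i}^{\prime} = \bar{C}$ and hence (\ref{eq:c2-MUC}).

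The main obstacle is this last coverage step: one must confirm that the residual cells left by Phases 1--3 for the final UAV are exactly those that Phase 4 can reach by climbing upward, and that the FF-G2 invariant of Proposition \ref{prop:feasibility2} forbids any other leftover pattern. Establishing that the terminal cell of the Phase 1--3 path sits directly below the uncovered column, so that the $U$ moves of Phase 4 genuinely sweep through those cells, is the delicate point that ties the whole argument together.
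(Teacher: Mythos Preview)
Your proposal is correct and follows essentially the same route as the paper: invoke Proposition~\ref{prop:feasibility2} to conclude that the region facing the last UAV is an FF-G2 (hence FF-G) shape, and then appeal to Proposition~\ref{prop:FF-G} for coverability. The paper's proof is far terser---just those two observations---so your explicit verification of connectivity and disjointness via the global set $F$, and your discussion of why Phase~4 can sweep up the residual column, are additions rather than deviations; if anything, you are filling in details the paper leaves implicit.
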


\begin{proof}
    According to Proposition \ref{prop:feasibility2}, the last remaining shape for the final UAV is an FF-G2 shape. Since an FF-G2 shape is a type of FF-G shape, it can be covered by the four phases of the proposed algorithm, as proved in Proposition \ref{prop:FF-G}.
\end{proof}
 To better understand Proposition \ref{prop:feasibility3}, Figure \ref{fig:appendixB-final} illustrates an example featuring five UAVs within a 13 × 13 search area. The proposed algorithm starts generating a path for the first UAV in a rectangle, forming an FF-G1 shape. According to Proposition \ref{prop:feasibility2}, subsequent shapes for the following UAVs are designated as FF-G2 shapes. Consequently, an FF-G2 shape remains for the fifth and last UAV, which can be covered within four phases of the algorithm.

\begin{figure}[h]
    \centering
    \includegraphics[width=0.9\textwidth]{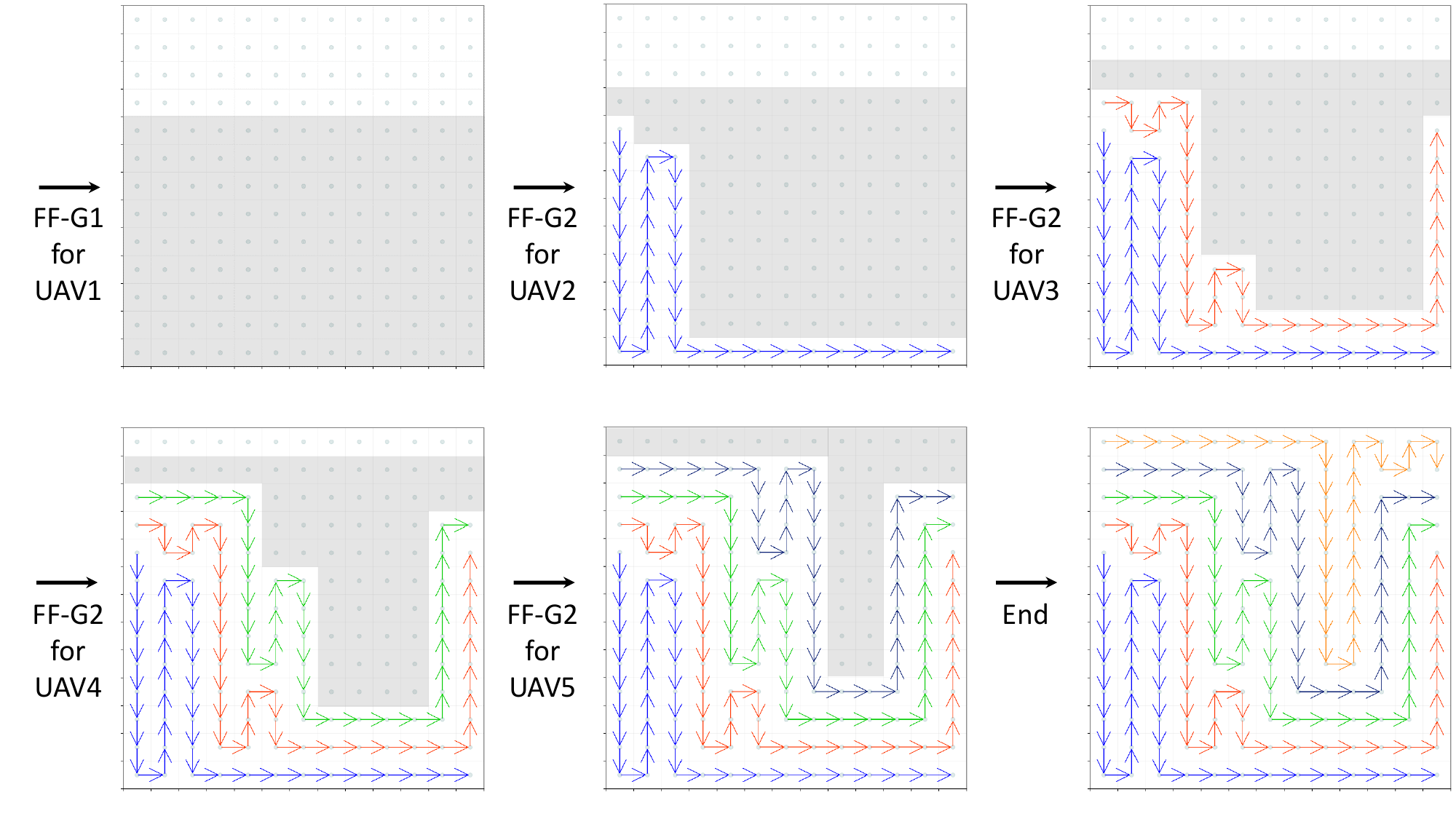}
    \caption{Example of the feasibility of the algorithm's solutions, as proved in Proposition \ref{prop:feasibility3} ($n=13$, $m=13$, $q=5$). All gray shapes are $C^9_{13}$ shapes in this example.}
   \label{fig:appendixB-final}
\end{figure}
By the next proposition, we prove the lower bound and upper bound of the algorithm's solution.

\begin{prop}\label{prop:solution-space}
    The solutions generated by the NOPP algorithm have objective values in the set $\{LB, LB+T_p\}$.
\end{prop}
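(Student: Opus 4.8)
The plan is to pin down each UAV's mission time exactly as a function of its number of $S$ and $P$ moves, and then to maximize over UAVs. The starting observation is that the NOPP algorithm only ever issues $D$, $S$, and $U$ moves, never an $O$ move, so every mission time has the form $(n-1)T_s + (\#P)\,T_p$ once I argue that each generated path uses exactly $n-1$ $S$ moves. I would establish the $S$-count by noting that, in the absence of $O$ moves, the $X$-coordinate along any path is non-decreasing and begins at $1$; since Phase 1 drives the path rightward until column $n$ is reached (as in the worked example of Fig.~\ref{fig:case1midsize}), and since Phases 2--4 neither delete an $S$ move nor push the path beyond column $n$, the number of $S$ moves is pinned to exactly $n-1$. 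This reduces the whole question to counting $P$ moves.

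Next I would track the $P$-move budget $\mathit{NP}$ through the phases for a non-final UAV ($i<q$). Writing $\mathit{NP}_0 := \lceil\tfrac{nm}{q}\rceil - n$ for its initial value, Phase 1 consumes some amount $P_1$ of the budget, Phase 2 returns $|U^*|$ to it, and Phase 3 runs $\mathit{NP}/2$ times, each iteration inserting exactly two $P$ moves around a BI pattern. Invoking Proposition~\ref{prop:feasibility2}, which guarantees the residual region is always an FF-G2 shape, I can assert that every Phase-3 insertion is feasible, so the current budget is consumed in full. The contributions then telescope: the total number of $P$ moves equals $\mathit{NP}_0$ when $\mathit{NP}$ is even upon entering Phase 3, and $\mathit{NP}_0 + 1$ when the one-unit parity correction is applied. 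Hence each non-final UAV has mission time exactly $(n-1)T_s + \mathit{NP}_0\,T_p = \text{LB}$ or $(n-1)T_s + (\mathit{NP}_0+1)\,T_p = \text{LB}+T_p$.

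It then remains to bound the final UAV and assemble the maximum. Each non-final UAV covers $\mathit{NP}_0 + n = \lceil\tfrac{nm}{q}\rceil$ or $\lceil\tfrac{nm}{q}\rceil + 1$ cells, so the first $q-1$ UAVs together cover at least $(q-1)\lceil\tfrac{nm}{q}\rceil$ cells; since $nm \le q\lceil\tfrac{nm}{q}\rceil$, the final UAV is left with at most $\lceil\tfrac{nm}{q}\rceil$ cells, whence its mission time is at most $\text{LB}$. Combining the pieces, every non-final UAV attains a value in $\{\text{LB},\text{LB}+T_p\}$ and is therefore $\ge \text{LB}$, while the final UAV is $\le \text{LB}$; consequently the maximum mission time, which is the objective, lies in $\{\text{LB},\text{LB}+T_p\}$. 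The case $q=1$ is immediate, since the lone UAV covers all $nm=\lceil\tfrac{nm}{q}\rceil$ cells and attains exactly $\text{LB}$.

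The main obstacle is the middle step: making the $P$-move accounting airtight. The delicate point is the interaction between Phase 2's removal of $U^*$ and Phase 3's parity-driven increment, together with the guarantee that Phase 3 never runs dry, i.e.\ that every scheduled insertion actually finds an admissible uncovered cell satisfying the Y-coordinate condition. This is exactly where the structural Proposition~\ref{prop:feasibility2} carries the weight, so the argument must invoke its FF-G2 invariant carefully; without it one could not rule out a UAV terminating with unspent $P$ moves, which would produce an objective strictly below $\text{LB}$ for that UAV and thereby break the clean $\{\text{LB},\text{LB}+T_p\}$ dichotomy.
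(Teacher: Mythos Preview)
Your argument has a genuine gap at precisely the point you flag as ``delicate'': the claim that Phase~3 always consumes the full $P$-move budget for every non-final UAV. Proposition~\ref{prop:feasibility2} does not give you this. It only says that \emph{after} UAV~$i$ completes Phases~1--3, the cells available to UAV~$i{+}1$ form an FF-G2 shape; it says nothing about whether UAV~$i$ itself had enough admissible cells to absorb all $\mathit{NP}/2$ scheduled insertions. In fact the paper's own proof shows this can fail: a non-final UAV~$i$ (with $1<i<q$) may find fewer than $A=\lceil nm/q\rceil$ cells satisfying the Y-coordinate condition, either because the region below its starting row has already been exhausted (the paper's case~2), or because Phase~3's pair-of-pits structure leaves some column-$n$ cells uncovered for the next UAV (case~3). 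In either situation UAV~$i$ terminates with unspent $P$-budget and mission time strictly below $\text{LB}$.

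Once that claim breaks, so does your downstream accounting. You deduce that the first $q-1$ UAVs jointly cover at least $(q-1)A$ cells and hence the final UAV sees at most $A$ cells; but if some intermediate UAV covered fewer than $A$ cells, this inequality is unjustified. The paper repairs the argument differently: when such an $i$ occurs, every subsequent UAV $i{+}1,\dots,q$ covers only $n$ cells (its own row), or in case~3 at most $n+m-q$ cells, and one checks $n+m-q\le\lceil nm/q\rceil$ via $m-q\le (m-q)n/q$ under the standing assumption $q\le n$. Meanwhile UAVs $1,\dots,i{-}1$ did cover $A$ or $A{+}1$ cells each, so the maximum mission time is still in $\{\text{LB},\text{LB}+T_p\}$. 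Your outline would need this case split to close; the FF-G2 invariant alone does not carry the weight you assign to it.
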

\begin{proof}
Let $A = \lceil \frac{nm}{q}\rceil$ denote the upper bound on the allocation of cells to a UAV required to achieve the LB (as used in Sect. \ref{sec:initial setting}). Based on the proposed algorithm's design, three following cases may occur.
\begin{enumerate}
    \item[$1)$] The first $q-1$ UAVs cover either $A$ or $A + 1$ (by using an extra $P$
move as explained in Sect. \ref{sec:phase3}) cells. As a result, the last UAV is left with fewer than $A + 1$ cells to cover. Consequently, the operation time can be either $LB$ or $LB + T_p$.
    \item[$2)$] In this case, we have a UAV, denoted by $i^{th}$ UAV ($1<i<q$), that covers less than $A$ cells.  This situation arises when the number of cells that meet the Y-coordinate condition (as detailed in Sect. \ref{sec:phase1}) remaining for the $i^{th}$ UAV is less than $A$, and the $i^{th}$ UAV cover these cells. In this case, UAVs numbered from 1 to $i-1$ have covered either $A$ or $A+1$ cells, while UAVs numbered from $i+1$ to $q$ are required to cover $n$ cells. Since $n$ is consistently less than $\lceil \frac{nm}{q} \rceil $, the resulting operation time is either $LB$ or $LB+T_p$.
    
    \item [$3)$] This case is a sub-state of the previous case. According to the second case, the number of cells left for $i^{th}$ UAV ($1<i<q$) is less than $A$. However, in this case, the $i^{th}$ UAV covers all of those remaining cells except some of them whose X-coordinate is $n$. This situation arises due to the algorithm's phase 3, which considers a pair of pits at each step. In this case, UAVs from 1 to $i - 1$ have covered either $A$ or $A + 1$ cells, and UAV $i + 1$ has a maximum of $n + m - q$ cells to cover. Since $m-q \le (m-q)\frac{n}{q}$, this leads to $n+m-q \le \frac{nm}{q}\le \lceil \frac{nm}{q} \rceil$. If $i\le q-2$, UAVs from $i + 2$ to $q$ should cover $n$ cells. As a result, the operation time is either $LB$ or $LB+T_p$.
\end{enumerate}
\end{proof}

\section*{Acknowledgements}
The research is supported by the National Science Foundation (NSF) under grant CMMI 1944068.

\printbibliography

\end{document}